\documentclass[letterpaper]{article} 
\usepackage[preprint]{aaai2027}  
\usepackage[hyphens]{url}  
\usepackage{graphicx} 
\usepackage{natbib}  
\usepackage{caption} 
\usepackage{algorithm}
\usepackage{algorithmic}
\usepackage{amsthm}
\usepackage{amssymb}
\usepackage{newfloat}
\usepackage{listings}
\DeclareCaptionStyle{ruled}{labelfont=normalfont,labelsep=colon,strut=off} 
\floatstyle{ruled}
\newfloat{listing}{tb}{lst}{}
\floatname{listing}{Listing}

\usepackage{booktabs}

\usepackage{amsmath}
\newtheorem{assumption}{Assumption}
\newtheorem{definition}{Definition}
\newtheorem{proposition}{Proposition}
\newtheorem{example}{Example}
\newtheorem{theorem}{Theorem}
\newtheorem{corollary}{Corollary}

\usepackage{booktabs}
\usepackage{adjustbox}
\usepackage{caption}
\title{AVCG: A Generalized Variational Framework for Counterfactual Generation under Hypothesis Distributions}
\author{
    Jamie Duell\textsuperscript{\rm 1,2},
    Alejandro Jimenez Rodriguez\textsuperscript{\rm 1,2},
    Mahault Albarracin\textsuperscript{\rm 1,3,4,5}\\
}
\affiliations{
    \textsuperscript{\rm 1}School of Computing and Digital Technologies,
    Sheffield Hallam University\\
    
    \textsuperscript{\rm 2}Centre of Excellence in AI and Robotics (CEAIR),
    Sheffield Hallam University\\
    
    \textsuperscript{\rm 3}Laboratoire d’analyse cognitive de l’information (LANCI),
    Université du Québec à Montréal\\
    
    \textsuperscript{\rm 4}Institut Santé et société (ISS),
    Université du Québec à Montréal\\
    
    \textsuperscript{\rm 5}Institut de recherches et d’études féministes (IREF),
    Université du Québec à Montréal

    j.duell@shu.ac.uk
}

\begin{document}

\maketitle

\begin{abstract} 
Counterfactual explanations formalize ``what-if" scenarios by identifying modifications to an input instance that obtain a desired alternative prediction. Traditionally, whether generated via instance-specific optimization or amortized single-pass models, these approaches rely on a single, deterministic point-estimate predictor. However, this ignores predictive uncertainty and hypothesis variability, leading to brittle explanations that frequently become invalid if the underlying model is retrained or updated. To address this fragility, we propose the \emph{Amortized Variational Counterfactual Generator} (AVCG), a generalized optimization framework that formulates counterfactual generation as optimization over an arbitrary distribution of plausible predictive hypotheses rather than a single deterministic predictor. This formulation naturally accommodates Bayesian posteriors, Rashomon-restricted hypothesis spaces, and other uncertainty representations within a unified optimization framework. Evaluation across multiple benchmark datasets demonstrates that the AVCG framework produces counterfactual explanations that remain highly valid under predictive uncertainty and model changes, while maintaining competitive plausibility and single-pass runtime performance. 
\end{abstract}


\section{Introduction}

The deployment of machine learning models in high-stakes domains has necessitated the development of algorithmic recourse, typically provided through counterfactual explanations. However, a critical vulnerability in standard counterfactual generation is its reliance on a single, fixed predictive model. Explanations optimized against a deterministic decision boundary are often notoriously brittle; they frequently fail to remain valid if the model is retrained with a different random seed, subjected to minor data shifts, or if there is inherent epistemic uncertainty in the parameter space \cite{10.24963/ijcai.2024/894, sokol2025needcounterfactualexplainabilityprincipled}.

To provide reliable recourse, counterfactual generation must transition from targeting a single point-estimate predictor to optimizing over distributions of plausible hypotheses. Such hypothesis distributions arise naturally in several settings. Bayesian predictive models represent predictive uncertainty through posterior inference, as expressed through Bayesian Neural Networks \cite{10.5555/3045118.3045290} and Monte Carlo Dropout \cite{10.5555/3045390.3045502}, while model multiplicity motivates restricting attention to empirically near-optimal hypotheses, such as the \emph{Rashomon set}. Many existing uncertainty-aware counterfactual approaches, however, are typically tied to a specific uncertainty representation and frequently rely on computationally expensive post-hoc optimization at inference time. Notably, existing approaches remain specialized to particular uncertainty representations e.g., \cite{Schut2021GeneratingIC, Duell_2024} rather than providing a unified optimization framework capable of operating over arbitrary hypothesis distributions. For example, while Bayesian approaches optimize robustness in expectation under a posterior distribution, they do not directly support optimization over constrained hypothesis spaces, such as the Rashomon set \cite{hsu2026the}, where the objective is robustness across empirically near-optimal models rather than posterior expectation alone. Recent work has shown that Monte Carlo Dropout can be used to construct empirical approximations of such Rashomon sets by applying dropout to pretrained weights \cite{DBLP:conf/iclr/HsuLH024}.

To address these limitations, we propose the \emph{Amortized Variational Counterfactual Generator (AVCG)}, a generalized optimization framework for counterfactual generation over arbitrary hypothesis distributions. Unlike existing uncertainty-aware approaches \cite{Duell_2024, batten2025uncertaintyaware, Schut2021GeneratingIC} that predominantly generate explanations through iterative post-hoc optimization at inference time, AVCG learns a single amortized generator that produces counterfactuals in one forward pass while optimizing expected validity with respect to an arbitrary hypothesis distribution. Thus, the proposed AVCG framework generalizes the optimization objective beyond deterministic or Bayesian predictors and retains the computational efficiency required for practical deployment. Thus, to summarise AVCG shifts counterfactual generation from optimizing against a single decision boundary to optimizing against an entire distribution of plausible decision boundaries. This abstraction decouples the optimization objective from the mechanism used to represent uncertainty, allowing new uncertainty models or constrained hypothesis spaces to be incorporated without redesigning the counterfactual generation framework.

To this end, the contributions of this work can be summarized as follows:
\begin{enumerate}
    \item We formulate counterfactual generation as optimization over arbitrary hypothesis distributions, introducing AVCG, a generalized variational framework independent of any specific uncertainty representation.  
    \item We derive a generalized variational lower bound whose predictive objective is taken with respect to arbitrary hypothesis distributions, subsuming existing Bayesian formulations as special cases.
    \item We demonstrate the generality of AVCG by instantiating two fundamentally different robustness paradigms: Bayesian uncertainty and Rashomon model multiplicity, within a single optimization framework.
    \item We provide theoretical and empirical evaluations demonstrating robustness, validity and inference speed.
\end{enumerate}

\section{Preliminaries}

We formalize our framework by assuming a generalized predictive space that accounts for hypothesis variability. Given some parametric model $f_{\theta} :\mathbb{R}^d \rightarrow [0,1]$, let $P_\Phi(\theta)$ denote a generalized probability distribution over the model parameters $\theta$, defining the space of plausible hypotheses.
\begin{definition}[Hypothesis Distribution]
Let $\mathcal{H} = \{ f_{\theta} : \theta \in \Theta \}$ denote a hypothesis space of predictive models parameterized by $\theta$. A hypothesis distribution $P_{\Phi}(\theta)$, parameterized by $\Phi$, is a probability distribution over the parameter space $\Theta$ representing a practitioner's prior beliefs or constraints over plausible predictors.   
\end{definition}
\begin{definition}[Generalized Predictive Distribution]
Given a parametric model $f_{\theta}(\cdot) = p(y \mid \cdot, \theta)$ and a hypothesis distribution $P_\Phi(\theta)$, the predictive distribution for any input $\mathbf{x} \in \mathbb{R}^d$ is defined by marginalizing over the hypothesis space:
\begin{align*}
    p_\Phi(y \mid \mathbf{x}) = \int p(y \mid \mathbf{x}, \theta) \, P_\Phi(\theta) \, d\theta,
\end{align*}
which can equivalently be written as the expectation
\begin{align*}
    \mathbb{E}_{\theta \sim P_\Phi(\theta)}[f_{\theta}(\mathbf{x})].
\end{align*}
\end{definition}

A counterfactual instance is traditionally generated to flip the prediction of a single point-estimate model. A counterfactual instance in a model structured for a pointwise estimate is such that: 
\begin{definition}[Counterfactual Instance]
    Given an instance $\mathbf{x}  \in \mathbb{R}^d$, an instance $\mathbf{x}^\prime \in \mathbb{R}^d$, is said to be a counterfactual instance of $\mathbf{x}$, if for some counterfactual generator $g : \mathbb{R}^d \rightarrow \mathbb{R}^d$ s.t. $g(\mathbf{x}) = \mathbf{x}^\prime$, then: 
    \begin{align*}
        \underset{y}{\arg \max} \ f_{\theta}(\mathbf{x})  \neq  \underset{y}{\arg \max} \ f_{\theta}(\mathbf{x}^\prime). 
    \end{align*}
\end{definition}

To ensure robustness over hypothesis distributions, we define a robust counterfactual instance as follows:

\begin{definition}[Robust Counterfactual Instance]
    Given a factual instance $\mathbf{x} \in \mathbb{R}^d$ and a counterfactual generator $g : \mathbb{R}^d \rightarrow \mathbb{R}^d$ such that $g(\mathbf{x}) = \mathbf{x}^\prime$, the instance $\mathbf{x}^\prime$ is a robust counterfactual instance if: 
    \begin{align*}
        \underset{y}{\arg \max} \ p_\Phi(y \mid \mathbf{x}) \neq \underset{y}{\arg \max} \ p_\Phi(y \mid \mathbf{x}^\prime). 
    \end{align*}
\end{definition}

\section{Methodology}

We derive a novel, generalized variational objective for counterfactual generation that  accounts for predictive uncertainty by taking the expectation over $P_\Phi(\theta)$. We then instantiate this general framework into two distinct mechanisms to handle different robustness desiderata: the Bayesian posterior (AVCG-B) and the Rashomon-restricted posterior (AVCG-R).
\begin{figure*}[t]
    \centering
    \includegraphics[width=0.9\linewidth]{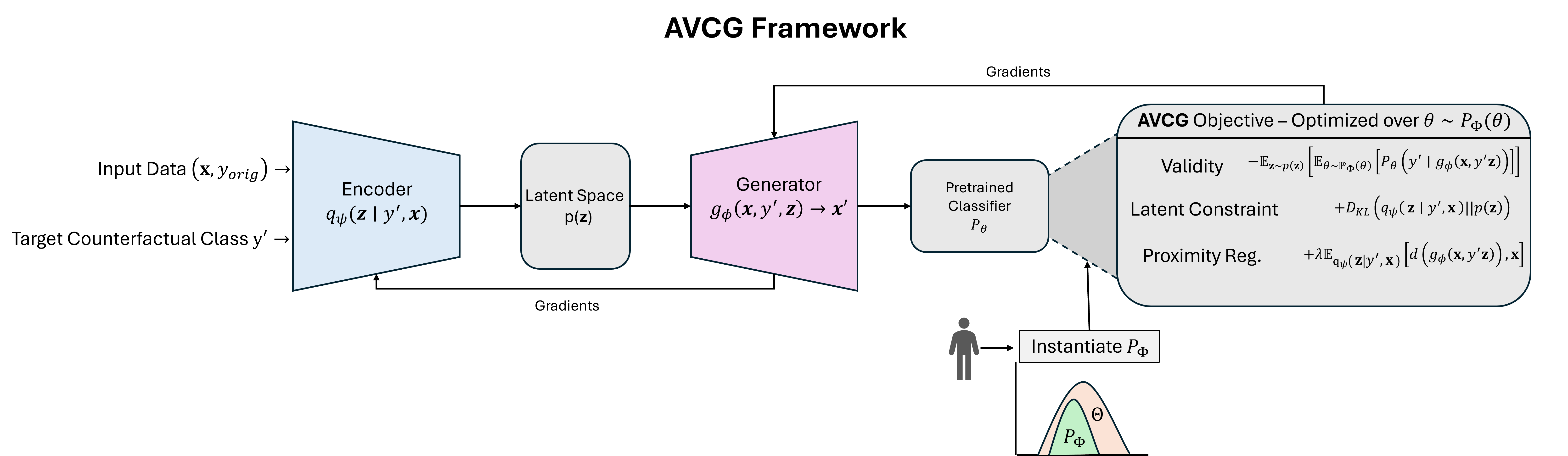}
    \caption{Overview of the AVCG Framework. In this work, $P_{\Phi}$ is instantiated as a Bayesian parametric posterior $P(\theta \mid D)$ or Rashomon set $P_{R}(\theta)$.}
    \label{fig:placeholder}
\end{figure*}

\subsection{The Generalized AVCG Framework}
To construct an amortized counterfactual generator, our goal is to maximize the predictive probability that the generated counterfactual samples achieve the desired label $y^\prime$ given the factual input $\mathbf{x}$. Operating within a latent-variable generative model, the true predictive distribution requires marginalizing over both the latent space $\mathbf{z}$ and the classifier's parameter space $\theta$. 

Assuming the prior over the latent distribution $p(\mathbf{z})$ is independent of the model's hypothesis space $P_\Phi(\theta)$, and that the generator $g_\phi$ is parameterized independently of $\theta$, we express the predictive objective as a double integral. We introduce a variational posterior $q_\psi(\mathbf{z} \mid y^\prime, \mathbf{x})$ to derive the lower bound:

\begin{align*}
     &\mathcal{L}(\mathbf{x},y^\prime) = \log \int \int P_{\theta}(y^\prime \mid g_{\phi}(\mathbf{x}, y^\prime, \mathbf{z}))p(\mathbf{z})P_\Phi(\theta) d\mathbf{z}d\theta \\
    &=  \log \int \int q_{\psi}(\mathbf{z} \mid y^\prime, \mathbf{x})\frac{P_{\theta}(y^\prime \mid g_{\phi}(\mathbf{x}, y^\prime, \mathbf{z}))p(\mathbf{z})P_\Phi(\theta)} {q_{\psi}(\mathbf{z} \mid y^\prime, \mathbf{x})}d\mathbf{z}d\theta \\
    &\geq   \int q_{\psi}(\mathbf{z} \mid y^\prime, \mathbf{x}) \bigg( \int P_\Phi(\theta)\log \ (P_{\theta}(y^\prime \mid g_{\phi}(\mathbf{x}, y^\prime, \mathbf{z}))) d\theta \bigg)d\mathbf{z} \\ &+ \int q_{\psi}(\mathbf{z} \mid y^\prime, \mathbf{x})\log \ \frac{p(\mathbf{z})}{q_{\psi}(\mathbf{z} \mid y^\prime, \mathbf{x})}d\mathbf{z}   \\ &\text{(Jensen's inequality and $q_{\psi}(\mathbf{z} \mid y^\prime, \mathbf{x}) \perp P_\Phi(\theta)$)}\\ 
    &= \mathbb{E}_{q_{\psi}(\mathbf{z} \mid y^\prime, \mathbf{x})}\bigg[\mathbb{E}_{\theta \sim P_\Phi(\theta)}\bigg[\log \ P_{\theta}(y^\prime \mid g_{\phi}(\mathbf{x}, y^\prime, \mathbf{z}))  \bigg] \bigg] \\ &+ \int q_{\psi}(\mathbf{z} \mid y^\prime, \mathbf{x}) \log \ \frac{p(\mathbf{z})}{q_{\psi}(\mathbf{z} \mid y^\prime, \mathbf{x})}d\mathbf{z}
    \\&= \mathbb{E}_{q_{\psi}(\mathbf{z} \mid y^\prime, \mathbf{x})}\bigg[\mathbb{E}_{\theta \sim P_\Phi(\theta)}\bigg[\log \ P_{\theta}(y^\prime \mid g_{\phi}(\mathbf{x}, y^\prime, \mathbf{z}))  \bigg] \bigg] \\ &+ \bigg( - D_{KL}(q_{\psi}(\mathbf{z} \mid y^\prime, \mathbf{x}) \mid\mid p(\mathbf{z})) \bigg). 
\end{align*}
Where $q_{\psi}$ is the encoder, $g_{\phi}$ is the generative model, and $P_{\theta}$ is the target parametric predictor.

\subsubsection{Proximity Regularization}
Finally, to align with standard counterfactual desiderata ensuring that interventions remain realistically actionable \cite{Mothilal_2020, sokol2025needcounterfactualexplainabilityprincipled}, we incorporate a proximity constraint using a distance function $d(\cdot)$: 
\begin{align*}
    \mathbb{E}_{q_{\psi}(\mathbf{z} \mid y^\prime, \mathbf{x})}\bigg[d(g_{\phi}(\mathbf{x}, y^\prime, \mathbf{z}), \mathbf{x}) \bigg].
\end{align*}
Let $D_{\text{train}} = \{ (\mathbf{x}_i, y^\prime_i) \}_{i=1}^N$, and 
\begin{align*}
    R_{\Phi}(\mathbf{x}^\prime) = \mathbb{E}_{q_{\psi}(\mathbf{z} \mid y^\prime, \mathbf{x})}\bigg[\mathbb{E}_{\theta \sim P_\Phi(\theta)}\bigg[\log \ P_{\theta}(y^\prime \mid g_{\phi}(\mathbf{x}, y^\prime, \mathbf{z}))  \bigg] \bigg]
\end{align*}
The final dataset-level objective for the generalized framework is:  
\begin{align*}
    {G}(\phi, \psi) = \sum^N_{i=1}\bigg[ - R_{\Phi}(\mathbf{x}^\prime) &+ D_{KL}(q_{\psi}(\mathbf{z} \mid y_i^\prime, \mathbf{x}_i) \mid\mid p(\mathbf{z})) \\ &+  \lambda \mathbb{E}_{q_{\psi}(\mathbf{z} \mid y_i^\prime, \mathbf{x})}\bigg[d(g_{\phi}(\mathbf{x}_i, y^\prime_i, \mathbf{z}), \mathbf{x}) \bigg]\bigg].
\end{align*}
whereby, optimal parameters $\phi^*$ and $\psi^*$ are obtained by minimising the objective, such that:
\begin{align*}
    \{\phi^*,\psi^* \} = \underset{\phi, \psi}{\arg \min }\ {G}(\mathbf{x}).  
\end{align*}

\subsection{AVCG-B: The Bayesian Instantiation}
When the practitioner's goal is to account for epistemic uncertainty and optimize for expected likelihood, we instantiate the general framework by setting $P_\Phi(\theta)$ to the full Bayesian posterior $P(\theta \mid D)$, such that our risk function is defined as: 
\begin{align*}
    R_{B}(\mathbf{x}^\prime) = \mathbb{E}_{q_{\psi}(\mathbf{z} \mid y^\prime, \mathbf{x})}\bigg[\mathbb{E}_{\theta \sim P(\theta\mid D ) }\bigg[\log \ P_{\theta}(y^\prime \mid g_{\phi}(\mathbf{x}, y^\prime, \mathbf{z}))  \bigg] \bigg],
\end{align*}

where we can substitute $R_{\Phi}(\mathbf{x}^\prime)$ with $R_{B}(\mathbf{x}^\prime)$  for the Bayesian posterior.

In practice, we approximate this posterior using stochastic weight masking via Monte Carlo Dropout \cite{10.5555/3045390.3045502}. During the training of AVCG, the generator optimizes against a dynamically sampled distribution of stochastic dropout masks. This forces the generated counterfactuals to achieve validity in expectation across the uncertainty of the network's weights, making it ideal for average-case generalization.

\subsection{AVCG-R: The Rashomon-Restricted Instantiation}
While the Bayesian posterior provides a probabilistic distribution of models, it does not explicitly enforce hard performance boundaries. In high-stakes domains, robustness across a constrained set of empirically near-optimal models may be desirable, namely the Rashomon set.

To target robustness over this empirically constrained hypothesis space, we instantiate $P_\Phi(\theta)$ as the Rashomon-restricted distribution $P_R(\theta)$. We define the empirical Rashomon set $\Theta_{R}$ as:
\begin{align*}
    \Theta_{R} = \{\theta : L(\theta) \leq L^* + \epsilon\},
\end{align*}
where $\epsilon$ specifies the maximum allowable deviation from the optimal validation loss $L^*$. The corresponding restricted posterior is:
\begin{align*}
    P_{R}(\theta) = \frac{P(\theta \mid D)\mathbf{1}[\theta \in \Theta_{R}]}{\int_{\Theta_R} P(\theta \mid D) \, d\theta}.
\end{align*}

To implement this reliably, AVCG-R replaces the stochastic dropout of the Bayesian instantiation with a set of \emph{deterministic, frozen weight masks}. Each mask is independently evaluated on a validation set, and only those satisfying $L(\theta_i) \leq L^* + \epsilon$ are retained. The generator then optimizes uniformly over this fixed hypothesis space. 

Modifying our loss function, we can instead integrate over the $P_{R}(\theta)$, such that our risk function conditioned on some $\epsilon$ is defined as: 
\begin{align*}
    R_{\epsilon}(\mathbf{x}^\prime) = \mathbb{E}_{q_{\psi}(\mathbf{z} \mid y^\prime, \mathbf{x})}\bigg[\mathbb{E}_{\theta \sim P_{R}(\theta) }\bigg[\log \ P_{\theta}(y^\prime \mid g_{\phi}(\mathbf{x}, y^\prime, \mathbf{z}))  \bigg] \bigg],
\end{align*}

where we can substitute $R_{\Phi}(\mathbf{x}^\prime)$ with $R_{\epsilon}(\mathbf{x}^\prime)$.

\subsection{Theoretical Analysis}
We begin by analysing the stability of the AVCG objective under divergence between hypothesis distributions.
\begin{theorem}[AVCG Stability]\label{thrm:stability}
Assume that there exists a constant $M>0$ such that
\[
\vert
\log P_{\theta}(y^\prime \mid g_{\phi}(\mathbf{x},y^\prime,\mathbf{z}))
\vert
\leq M
\]
for every $\theta$ in the supports of $P_\Phi^a$ and $P_\Phi^b$. Let
\begin{align*}
L_a(\phi,\psi) = R^a_{\Phi}(\mathbf{x}^\prime) &+ D_{KL}(q_{\psi}(\mathbf{z} \mid y_i^\prime, \mathbf{x}_i) \mid\mid p(\mathbf{z})) \\ &+  \lambda \mathbb{E}_{q_{\psi}(\mathbf{z} \mid y_i^\prime, \mathbf{x})}\bigg[d(g_{\phi}(\mathbf{x}_i, y^\prime_i, \mathbf{z}), \mathbf{x}) \bigg],
\end{align*}
and
\begin{align*}
L_b(\phi,\psi) = R^b_{\Phi}(\mathbf{x}^\prime) &+ D_{KL}(q_{\psi}(\mathbf{z} \mid y_i^\prime, \mathbf{x}_i) \mid\mid p(\mathbf{z})) \\ &+  \lambda \mathbb{E}_{q_{\psi}(\mathbf{z} \mid y_i^\prime, \mathbf{x})}\bigg[d(g_{\phi}(\mathbf{x}_i, y^\prime_i, \mathbf{z}), \mathbf{x}) \bigg],
\end{align*}
denote the AVCG objectives induced by the hypothesis distributions
$P_\Phi^a$
and
$P_\Phi^b$,
respectively, such that: 
\begin{align*}
     R^{(\cdot)}_{\Phi}(\mathbf{x}^\prime) = \mathbb{E}_{q_{\psi}(\mathbf{z} \mid y^\prime, \mathbf{x})}\bigg[\mathbb{E}_{\theta \sim P^{(\cdot)}_\Phi(\theta)}\bigg[\log \ P_{\theta}(y^\prime \mid g_{\phi}(\mathbf{x}, y^\prime, \mathbf{z}))  \bigg] \bigg]
\end{align*}
where ${(\cdot)}$ can be $a$ and $b$ respectively. Then, for fixed encoder and generator parameters $(\phi,\psi)$,
\[
\vert L_a(\phi,\psi)-L_b(\phi,\psi)\vert 
\le
\sqrt{2}M
\sqrt{
D_{KL}(P_\Phi^a\mid\mid P_\Phi^b)
}.
\]
\end{theorem}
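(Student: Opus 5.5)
The plan is to observe that the KL term and the proximity term in $L_a$ and $L_b$ depend only on $(\phi,\psi)$ and not on the hypothesis distribution. They therefore cancel exactly in the difference, which gives
\[
\vert L_a(\phi,\psi)-L_b(\phi,\psi)\vert = \vert R^a_{\Phi}(\mathbf{x}^\prime)-R^b_{\Phi}(\mathbf{x}^\prime)\vert .
\]
Write $h(\theta,\mathbf{z}) = \log P_{\theta}(y^\prime \mid g_{\phi}(\mathbf{x},y^\prime,\mathbf{z}))$. Because $q_\psi$ is independent of $\theta$, Fubini lets us write the difference as
\[
\mathbb{E}_{q_\psi(\mathbf{z}\mid y^\prime,\mathbf{x})}\Big[\mathbb{E}_{P^a_\Phi}[h(\theta,\mathbf{z})]-\mathbb{E}_{P^b_\Phi}[h(\theta,\mathbf{z})]\Big].
\]
Integrability is guaranteed by the boundedness assumption. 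Moving the absolute value inside the outer expectation, it suffices to bound the inner difference uniformly in $\mathbf{z}$.

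For fixed $\mathbf{z}$, the function $\theta \mapsto h(\theta,\mathbf{z})$ is bounded by $M$ in absolute value on the supports. The standard variational characterization of total variation then gives
\[
\big\vert \mathbb{E}_{P^a_\Phi}[h]-\mathbb{E}_{P^b_\Phi}[h]\big\vert \le 2M\,\mathrm{TV}(P^a_\Phi,P^b_\Phi),
\qquad \mathrm{TV}(P,Q)=\sup_A \vert P(A)-Q(A)\vert .
\]
This follows by writing the difference as $\int h\,(dP^a-dP^b)$ and bounding it by $M\,\|P^a-P^b\|_1 = 2M\,\mathrm{TV}$.

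Next, apply Pinsker's inequality, $\mathrm{TV}(P^a_\Phi,P^b_\Phi)\le \sqrt{\tfrac12 D_{KL}(P^a_\Phi\mid\mid P^b_\Phi)}$. This yields
\[
2M\sqrt{\tfrac12 D_{KL}} = \sqrt{2}\,M\sqrt{D_{KL}(P^a_\Phi\mid\mid P^b_\Phi)} .
\]
The bound is independent of $\mathbf{z}$, so integrating against $q_\psi$ preserves it, and the theorem follows.

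The main obstacle is a subtle one about the supports. The assumption bounds $h$ only on the union of the supports of $P^a_\Phi$ and $P^b_\Phi$. That is exactly the set where the signed measure $P^a_\Phi-P^b_\Phi$ lives, so the $L_1$ bound remains valid. If $P^a_\Phi\not\ll P^b_\Phi$, the KL divergence is infinite and the inequality holds trivially, so no separate case is needed.

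I would also note the following point about constants. Using the sharper centered bound $\sup_\theta h-\inf_\theta h\le 2M$ together with the identity $\int c\,(dP^a-dP^b)=0$ gives the same constant, so $\sqrt{2}M$ is exactly what Pinsker yields here. No tighter centering trick is required.
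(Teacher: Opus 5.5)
Your proof is correct and follows the paper's argument: cancel the $\theta$-independent KL and proximity terms, bound the inner difference by $M\int\vert P^a_\Phi-P^b_\Phi\vert\,d\theta = 2M\,\mathrm{TV}(P^a_\Phi,P^b_\Phi)$, integrate against $q_\psi$, and finish with Pinsker. One small correction to your closing aside: $\sqrt{2}M$ is not the best constant available here, because $\log P_\theta \le 0$ means the oscillation of $h$ is at most $M$, and centering then gives $M\sqrt{D_{KL}(P^a_\Phi\mid\mid P^b_\Phi)/2}$, although this only strengthens the stated result.
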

This illustrates that small changes in the learned hypothesis distribution induce proportionally small changes in the optimization objective, providing stability under hypothesis class uncertainty.

\begin{proposition}
Let $f_\theta(\mathbf{x}^\prime) := P_\theta(y^\prime \mid g_{\phi}(\cdot))$. For any $0 < \gamma < 1$ and $f_{\theta}(\mathbf{x}^\prime) \in [0,1],$
\begin{align*}
    \mathbb{P}(f_{\theta}(\mathbf{x}^\prime) \geq \gamma) \geq 1- \frac{1-\mathbb{E}_{\theta \sim P_{\Phi}(\theta)}[f_{\theta}(\mathbf{x}^\prime)]}{1 -\gamma}.
\end{align*}
\end{proposition}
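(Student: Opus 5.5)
We prove the bound by a reverse Markov inequality. Throughout, $\mathbf{x}^\prime$ (equivalently $g_\phi(\mathbf{x},y^\prime,\mathbf{z})$) is held fixed, and the only randomness is $\theta \sim P_\Phi(\theta)$. The probability $\mathbb{P}$ in the statement is therefore taken with respect to $P_\Phi$. The key observation is that, because $f_\theta(\mathbf{x}^\prime)\in[0,1]$, the random variable
\begin{align*}
    U := 1 - f_\theta(\mathbf{x}^\prime)
\end{align*}
is nonnegative. This is what lets us apply the ordinary Markov inequality to the \emph{failure margin} rather than to $f_\theta$ itself.

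The steps, in order, are as follows.

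\textbf{Step 1: rewrite the event.} Since $0<\gamma<1$, we have $1-\gamma>0$. The event $\{f_\theta(\mathbf{x}^\prime) < \gamma\}$ coincides with $\{U > 1-\gamma\}$.

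\textbf{Step 2: apply Markov's inequality to $U$.} Using $U \geq 0$ and $1-\gamma>0$,
\begin{align*}
    \mathbb{P}(U > 1-\gamma) \leq \mathbb{P}(U \geq 1-\gamma) \leq \frac{\mathbb{E}_{\theta\sim P_\Phi}[U]}{1-\gamma} = \frac{1-\mathbb{E}_{\theta\sim P_\Phi}[f_\theta(\mathbf{x}^\prime)]}{1-\gamma}.
\end{align*}
The final equality uses linearity of expectation. The expectation exists because $f_\theta$ is bounded; we take measurability of $\theta\mapsto f_\theta(\mathbf{x}^\prime)$ as implicit, as elsewhere in the paper.

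\textbf{Step 3: take complements.} We have
\begin{align*}
    \mathbb{P}(f_\theta(\mathbf{x}^\prime)\ge\gamma) = 1-\mathbb{P}(f_\theta(\mathbf{x}^\prime)<\gamma) \ge 1-\frac{1-\mathbb{E}_{\theta\sim P_\Phi}[f_\theta(\mathbf{x}^\prime)]}{1-\gamma},
\end{align*}
which is exactly the claimed bound.

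There is no real obstacle; the only points needing care are bookkeeping. First, the bound must be applied to $1-f_\theta$, not to $f_\theta$; this is where the hypothesis $f_\theta\in[0,1]$ enters. Second, the strict versus non-strict inequality must be handled correctly when passing to the complement, and this works in our favour. Third, $1-\gamma>0$ is needed so that the division is legitimate. It is also worth remarking that the right-hand side may be negative, in which case the statement is trivially true. Finally, the bound is consistent with the generalized predictive distribution: $\mathbb{E}_{\theta\sim P_\Phi}[f_\theta(\mathbf{x}^\prime)] = p_\Phi(y^\prime\mid\mathbf{x}^\prime)$. Hence a high predictive probability under $P_\Phi$ forces most hypotheses to assign at least $\gamma$ to $y^\prime$.
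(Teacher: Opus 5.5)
Your proof is correct and matches the paper's argument: Markov's inequality applied to the nonnegative variable $1 - f_\theta(\mathbf{x}^\prime)$ at level $1-\gamma$, followed by taking complements. Your handling of the strict versus non-strict events is slightly more careful than the paper's, which bounds $\mathbb{P}(f_\theta(\mathbf{x}^\prime) \le \gamma)$ and then takes the complement without noting that this controls only $\{f_\theta(\mathbf{x}^\prime) > \gamma\}$, a subset of $\{f_\theta(\mathbf{x}^\prime) \ge \gamma\}$.
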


This bound shows that optimizing the generalized AVCG objective specifically under some hypothesis set $P_{\Phi}(\theta)$ satisfies $    f_{\theta}(\mathbf{x}^\prime) \geq \gamma$ with high probability.

\subsubsection{Rashomon Example}
Consider a newly trained model $\theta^*$, assuming the newly trained model belongs to the hypothesis set, in this case for the Rashomon set this can be expressed via the assumption that $\theta^* \sim P_{R}(\theta)$. Repeated training with stochastic initialization produces solutions with comparable empirical loss, which can be treated as approximate empirical samples from the Rashomon-restricted posterior $P_R(\theta)$, leading to Corollary \ref{central_corol}.
\begin{assumption}\label{var_bound}
The variance over a Rashomon set of models is bounded, such that: 
\begin{align*}
     \mathbb{V}_{\theta \sim P_{R}(\theta)}(f_{\theta}(\mathbf{x}^\prime)) \leq \sigma^2.
\end{align*}
\end{assumption}
\begin{corollary}\label{central_corol}
Let $\sigma^2 = \mathrm{Var}_{\theta\sim P_R} (f_\theta(\mathbf{x}^\prime))$.  Then for any $\beta>0$,
\begin{align*}
    \mathbb{P}_{\theta^* \sim P_R} \Big( f_{\theta^*}(\mathbf{x}^\prime) > \mathbb{E}_{\theta\sim P_R}[f_\theta(\mathbf{x}^\prime)] - \beta \Big) \ge 1-\frac{\sigma^2}{\beta^2}.
\end{align*}
\end{corollary}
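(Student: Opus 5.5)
The corollary is a one-sided Chebyshev bound applied to the random variable $X := f_{\theta^*}(\mathbf{x}^\prime)$, where $\theta^* \sim P_R$ is a draw from the Rashomon-restricted posterior. This matches the modelling assumption in the Rashomon example that a retrained model can be treated as such a draw. The counterfactual $\mathbf{x}^\prime = g_\phi(\mathbf{x}, y^\prime, \mathbf{z})$ is held fixed, i.e.\ we condition on the trained generator and the latent sample. With $\mathbf{x}^\prime$ fixed, the only randomness is in $\theta^*$. Since $f_\theta(\mathbf{x}^\prime)\in[0,1]$, $X$ is bounded, so its mean $\mu := \mathbb{E}_{\theta\sim P_R}[f_\theta(\mathbf{x}^\prime)]$ and variance $\sigma^2$ exist. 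Assumption~\ref{var_bound} guarantees the variance is controlled, and in the corollary $\sigma^2$ is taken to be exactly this variance.

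The key steps are as follows.
\begin{enumerate}
\item Pass to the complement:
\[
\mathbb{P}(X > \mu - \beta) = 1 - \mathbb{P}(X \le \mu - \beta).
\]
\item Bound the lower tail by the two-sided deviation event, using $\{X \le \mu-\beta\} \subseteq \{|X-\mu| \ge \beta\}$.
\item Apply Chebyshev's inequality, which is Markov's inequality applied to the nonnegative variable $(X-\mu)^2$ at threshold $\beta^2$:
\[
\mathbb{P}(|X-\mu|\ge\beta) \le \frac{\mathbb{E}[(X-\mu)^2]}{\beta^2} = \frac{\sigma^2}{\beta^2}.
\]
\item Combine the three steps to obtain $\mathbb{P}(X>\mu-\beta)\ge 1-\sigma^2/\beta^2$. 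When $\beta \le \sigma$ the right-hand side is nonpositive and the bound holds trivially.
\end{enumerate}

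There is no real technical obstacle; the only care needed is in setting up the probabilistic model correctly. The probability is over $\theta^*$ alone, so I treat $\mathbf{x}^\prime$ as deterministic, conditioning on $\mathbf{z}$ if the generator is stochastic. If one instead wanted the bound averaged over $\mathbf{z}\sim q_\psi$, it follows by taking expectations of the conditional bound. In that case Assumption~\ref{var_bound} would be needed uniformly in $\mathbf{x}^\prime$, so that each conditional variance is at most the same $\sigma^2$.

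If a sharper statement is desired, Cantelli's one-sided inequality can replace Chebyshev in step 3. It gives $\mathbb{P}(X\le\mu-\beta)\le \sigma^2/(\sigma^2+\beta^2)$, which implies the stated bound because $\sigma^2/(\sigma^2+\beta^2)\le\sigma^2/\beta^2$.
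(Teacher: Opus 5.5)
Your proof is correct and follows essentially the same route as the paper, which applies two-sided Chebyshev to get $\mathbb{P}(|X-\mu|<\beta)\ge 1-\sigma^2/\beta^2$ and then uses the inclusion $\{|X-\mu|<\beta\}\subseteq\{X>\mu-\beta\}$; this is the contrapositive form of your complement step $\{X\le\mu-\beta\}\subseteq\{|X-\mu|\ge\beta\}$. Your Cantelli remark is a valid sharpening that the paper does not make.
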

\begin{table*}[t]
\centering
\caption{Amortized Counterfactual Evaluation on Adult Income and Breast Cancer (Mean $\pm$ std over 5 seeds). Methods are grouped by the evaluation environment: Strict Empirical Bound ($\epsilon=0.0$) and Relaxed Empirical Bound ($\epsilon=0.8$).}
\label{tab:results_adult_breast1}
\resizebox{\textwidth}{!}{
\begin{tabular}{l ccc ccccc c}
\toprule
\textbf{Method} & \textbf{Val} $\uparrow$ & \textbf{IM1} $\downarrow$ & \textbf{Imp} $\downarrow$ & \textbf{Div} $\uparrow$ & \textbf{CMV} $\uparrow$ & \textbf{NE} $\downarrow$ & \textbf{R$_{IC}$} $\downarrow$ & \textbf{RVR} $\uparrow$ & \textbf{Time (s)} $\downarrow$ \\
\midrule
\multicolumn{10}{c}{\textit{Dataset: Adult Income}} \\
\midrule
\multicolumn{10}{l}{\textbf{Environment: $\epsilon = 0.0$ (Strict Empirical Bound)}} \\
\midrule
Batten et al. & 0.910 $\pm$ 0.286 & 1.403 $\pm$ 1.722 & 11.639 $\pm$ 3.361 & 0.000 $\pm$ 0.000 & 0.749 $\pm$ 0.278 & 0.0054 $\pm$ 0.0073 & 0.0342 $\pm$ 0.1030 & 0.675 $\pm$ 0.388 & 0.0171 $\pm$ 0.1092 \\
QUCE          & 0.918 $\pm$ 0.274 & 0.963 $\pm$ 0.322 & 9.807 $\pm$ 2.055  & 0.000 $\pm$ 0.000 & 0.883 $\pm$ 0.252 & 0.0012 $\pm$ 0.0031 & 0.0054 $\pm$ 0.0156 & 0.923 $\pm$ 0.251 & 0.0102 $\pm$ 0.0740 \\
Schut et al.  & 0.982 $\pm$ 0.133 & 1.943 $\pm$ 2.378 & 12.033 $\pm$ 3.225 & 0.000 $\pm$ 0.000 & 0.951 $\pm$ 0.163 & 0.0036 $\pm$ 0.0052 & 0.0163 $\pm$ 0.1533 & 0.915 $\pm$ 0.185 & 0.0443 $\pm$ 0.1013 \\
\textbf{AVCG-B}        & \textbf{1.000} $\pm$ \textbf{0.000} & \textbf{0.769} $\pm$ \textbf{0.214} & 8.320 $\pm$ 0.047 & \textbf{0.030} $\pm$ \textbf{0.010} & \textbf{1.000} $\pm$ \textbf{0.000} & \textbf{0.0000} $\pm$ \textbf{0.0000} & \textbf{0.0000} $\pm$ \textbf{0.0000} & \textbf{1.000} $\pm$ \textbf{0.000} & \textbf{0.0011} $\pm$ \textbf{0.0001} \\
\textbf{AVCG-R ($\epsilon=0.0$)} & \textbf{1.000} $\pm$ \textbf{0.000} & 0.785 $\pm$ 0.178 & \textbf{8.303} $\pm$ \textbf{0.059} & 0.017 $\pm$ 0.006 & \textbf{1.000} $\pm$ \textbf{0.000} & \textbf{0.0000} $\pm$ \textbf{0.0000} & \textbf{0.0000} $\pm$ \textbf{0.0000} & \textbf{1.000} $\pm$ \textbf{0.000} & \textbf{0.0011} $\pm$ \textbf{0.0001} \\
\midrule
\multicolumn{10}{l}{\textbf{Environment: $\epsilon = 0.8$ (Relaxed Empirical Bound)}} \\
\midrule
Batten et al. & 0.912 $\pm$ 0.283 & 1.403 $\pm$ 1.722 & 11.639 $\pm$ 3.361 & 0.000 $\pm$ 0.000 & 0.749 $\pm$ 0.278 & 0.0057 $\pm$ 0.0090 & 0.0355 $\pm$ 0.0903 & 0.774 $\pm$ 0.197 & 0.0171 $\pm$ 0.1092 \\
QUCE          & 0.918 $\pm$ 0.274 & 0.963 $\pm$ 0.322 & 9.807 $\pm$ 2.055  & 0.000 $\pm$ 0.000 & 0.883 $\pm$ 0.252 & 0.0013 $\pm$ 0.0040 & 0.0047 $\pm$ 0.0144 & 0.919 $\pm$ 0.235 & 0.0102 $\pm$ 0.0740 \\
Schut et al.  & 0.982 $\pm$ 0.133 & 1.943 $\pm$ 2.378 & 12.033 $\pm$ 3.225 & 0.000 $\pm$ 0.000 & 0.951 $\pm$ 0.163 & 0.0037 $\pm$ 0.0055 & 0.0173 $\pm$ 0.1483 & 0.955 $\pm$ 0.131 & 0.0443 $\pm$ 0.1013 \\
\textbf{AVCG-B}        & \textbf{1.000} $\pm$ \textbf{0.000} & 0.769 $\pm$ 0.214 & 8.320 $\pm$ 0.047 & \textbf{0.030} $\pm$ \textbf{0.010} & \textbf{1.000} $\pm$ \textbf{0.000} & \textbf{0.0000} $\pm$ \textbf{0.0000} & \textbf{0.0000} $\pm$ \textbf{0.0000} & \textbf{1.000} $\pm$ \textbf{0.000} & \textbf{0.0011} $\pm$ \textbf{0.0001} \\
\textbf{AVCG-R ($\epsilon=0.8$)} & \textbf{1.000} $\pm$ \textbf{0.000} & \textbf{0.764} $\pm$ \textbf{0.209} & \textbf{8.308} $\pm$ \textbf{0.051} & 0.022 $\pm$ 0.007 & \textbf{1.000} $\pm$ \textbf{0.000} & \textbf{0.0000} $\pm$ \textbf{0.0000} & \textbf{0.0000} $\pm$ \textbf{0.0000} & \textbf{1.000} $\pm$ \textbf{0.000} & \textbf{0.0011} $\pm$ \textbf{0.0001} \\
\midrule
\multicolumn{10}{c}{\textit{Dataset: Breast Cancer}} \\
\midrule
\multicolumn{10}{l}{\textbf{Environment: $\epsilon = 0.0$ (Strict Empirical Bound)}} \\
\midrule
Batten et al. & 0.948 $\pm$ 0.222 & 1.425 $\pm$ 0.714 & 7.225 $\pm$ 1.204 & 0.000 $\pm$ 0.000 & 0.655 $\pm$ 0.336 & 0.0051 $\pm$ 0.0063 & 0.0142 $\pm$ 0.0623 & 0.870 $\pm$ 0.187 & 0.0111 $\pm$ 0.0062 \\
QUCE          & \textbf{1.000} $\pm$ \textbf{0.000} & 0.787 $\pm$ 0.520 & 6.173 $\pm$ 1.117 & 0.000 $\pm$ 0.000 & 0.992 $\pm$ 0.046 & 0.0012 $\pm$ 0.0028 & \textbf{0.0004} $\pm$ \textbf{0.0031} & 0.995 $\pm$ 0.033 & 0.0103 $\pm$ 0.0056 \\
Schut et al.  & \textbf{1.000} $\pm$ \textbf{0.000} & 1.163 $\pm$ 0.330 & 8.036 $\pm$ 1.371 & 0.000 $\pm$ 0.000 & 0.662 $\pm$ 0.313 & 0.0017 $\pm$ 0.0018 & 0.0153 $\pm$ 0.0542 & 0.996 $\pm$ 0.023 & 0.0862 $\pm$ 0.0480 \\
\textbf{AVCG-B}        & \textbf{1.000} $\pm$ \textbf{0.000} & \textbf{0.579} $\pm$ \textbf{0.153} & \textbf{5.317} $\pm$ \textbf{0.594} & \textbf{1.135} $\pm$ \textbf{0.171} & \textbf{1.000} $\pm$ \textbf{0.000} & \textbf{0.0000} $\pm$ \textbf{0.0000} & 0.0477 $\pm$ 0.0391 & \textbf{1.000} $\pm$ \textbf{0.000} & \textbf{0.0011} $\pm$ \textbf{0.0001} \\
\textbf{AVCG-R ($\epsilon=0.0$)} & \textbf{1.000} $\pm$ \textbf{0.000} & 0.658 $\pm$ 0.147 & 5.364 $\pm$ 0.544 & 1.078 $\pm$ 0.209 & \textbf{1.000} $\pm$ \textbf{0.000} & 0.0001 $\pm$ 0.0002 & 0.0443 $\pm$ 0.0459 & \textbf{1.000} $\pm$ \textbf{0.000} & \textbf{0.0010} $\pm$ \textbf{0.0001} \\
\midrule
\multicolumn{10}{l}{\textbf{Environment: $\epsilon = 0.8$ (Relaxed Empirical Bound)}} \\
\midrule
Batten et al. & 0.948 $\pm$ 0.222 & 1.425 $\pm$ 0.714 & 7.225 $\pm$ 1.204 & 0.000 $\pm$ 0.000 & 0.655 $\pm$ 0.336 & 0.0045 $\pm$ 0.0047 & 0.0138 $\pm$ 0.0522 & 0.855 $\pm$ 0.161 & 0.0111 $\pm$ 0.0062 \\
QUCE          & \textbf{1.000} $\pm$ \textbf{0.000} & 0.787 $\pm$ 0.520 & 6.173 $\pm$ 1.117 & 0.000 $\pm$ 0.000 & 0.992 $\pm$ 0.046 & 0.0010 $\pm$ 0.0022 & \textbf{0.0003} $\pm$ \textbf{0.0021} & 0.995 $\pm$ 0.020 & 0.0103 $\pm$ 0.0056 \\
Schut et al.  & \textbf{1.000} $\pm$ \textbf{0.000} & 1.163 $\pm$ 0.330 & 8.036 $\pm$ 1.371 & 0.000 $\pm$ 0.000 & 0.662 $\pm$ 0.313 & 0.0018 $\pm$ 0.0021 & 0.0246 $\pm$ 0.0967 & 0.992 $\pm$ 0.017 & 0.0862 $\pm$ 0.0480 \\
\textbf{AVCG-B}        & \textbf{1.000} $\pm$ \textbf{0.000} & \textbf{0.579} $\pm$ \textbf{0.153} & \textbf{5.317} $\pm$ \textbf{0.594} & \textbf{1.135} $\pm$ \textbf{0.171} & \textbf{1.000} $\pm$ \textbf{0.000} & \textbf{0.0000} $\pm$ \textbf{0.0000} & 0.0466 $\pm$ 0.0397 & \textbf{1.000} $\pm$ \textbf{0.000} & \textbf{0.0011} $\pm$ \textbf{0.0001} \\
\textbf{AVCG-R ($\epsilon=0.8$)} & \textbf{1.000} $\pm$ \textbf{0.000} & 0.639 $\pm$ 0.129 & 5.377 $\pm$ 0.584 & 1.112 $\pm$ 0.176 & \textbf{1.000} $\pm$ \textbf{0.000} & \textbf{0.0000} $\pm$ \textbf{0.0000} & 0.0455 $\pm$ 0.0361 & \textbf{1.000} $\pm$ \textbf{0.000} & \textbf{0.0011} $\pm$ \textbf{0.0001} \\
\bottomrule
\end{tabular}
}
\end{table*}

\begin{table*}[t]
\centering
\caption{Amortized Counterfactual Evaluation on Heart Disease and Spambase (Mean $\pm$ std over 5 seeds).}
\label{tab:results_heart_spam1}
\resizebox{\textwidth}{!}{
\begin{tabular}{l ccc ccccc c}
\toprule
\textbf{Method} & \textbf{Val} $\uparrow$ & \textbf{IM1} $\downarrow$ & \textbf{Imp} $\downarrow$ & \textbf{Div} $\uparrow$ & \textbf{CMV} $\uparrow$ & \textbf{NE} $\downarrow$ & \textbf{R$_{IC}$} $\downarrow$ & \textbf{RVR} $\uparrow$ & \textbf{Time (s)} $\downarrow$ \\
\midrule
\multicolumn{10}{c}{\textit{Dataset: Heart Disease}} \\
\midrule
\multicolumn{10}{l}{\textbf{Environment: $\epsilon = 0.0$ (Strict Empirical Bound)}} \\
\midrule
Batten et al. & 0.891 $\pm$ 0.312 & 1.085 $\pm$ 0.427 & 4.740 $\pm$ 0.459 & 0.000 $\pm$ 0.000 & 0.591 $\pm$ 0.329 & 0.0029 $\pm$ 0.0027 & 0.0292 $\pm$ 0.0853 & 0.712 $\pm$ 0.224 & 0.0126 $\pm$ 0.0060 \\
QUCE          & 0.985 $\pm$ 0.122 & 0.780 $\pm$ 0.498 & 3.844 $\pm$ 0.375 & 0.000 $\pm$ 0.000 & 0.934 $\pm$ 0.158 & 0.0012 $\pm$ 0.0012 & \textbf{0.0005} $\pm$ \textbf{0.0045} & 0.966 $\pm$ 0.096 & 0.0084 $\pm$ 0.0040 \\
Schut et al.  & \textbf{1.000} $\pm$ \textbf{0.000} & 0.802 $\pm$ 0.201 & 5.203 $\pm$ 0.518 & 0.000 $\pm$ 0.000 & 0.997 $\pm$ 0.030 & 0.0008 $\pm$ 0.0009 & 0.0113 $\pm$ 0.0492 & \textbf{1.000} $\pm$ \textbf{0.000} & 0.0701 $\pm$ 0.0250 \\
\textbf{AVCG-B}        & \textbf{1.000} $\pm$ \textbf{0.000} & \textbf{0.407} $\pm$ \textbf{0.163} & \textbf{3.616} $\pm$ \textbf{0.123} & 0.670 $\pm$ 0.127 & \textbf{1.000} $\pm$ \textbf{0.000} & \textbf{0.0002} $\pm$ \textbf{0.0002} & 0.0211 $\pm$ 0.0180 & \textbf{1.000} $\pm$ \textbf{0.000} & 0.0011 $\pm$ 0.0001 \\
\textbf{AVCG-R ($\epsilon=0.0$)} & \textbf{1.000} $\pm$ \textbf{0.000} & 0.477 $\pm$ 0.174 & 3.661 $\pm$ 0.145 & \textbf{0.695} $\pm$ \textbf{0.130} & \textbf{1.000} $\pm$ \textbf{0.000} & \textbf{0.0002} $\pm$ \textbf{0.0003} & 0.0235 $\pm$ 0.0208 & \textbf{1.000} $\pm$ \textbf{0.000} & \textbf{0.0010} $\pm$ \textbf{0.0001} \\
\midrule
\multicolumn{10}{l}{\textbf{Environment: $\epsilon = 0.8$ (Relaxed Empirical Bound)}} \\
\midrule
Batten et al. & 0.884 $\pm$ 0.321 & 1.085 $\pm$ 0.427 & 4.740 $\pm$ 0.459 & 0.000 $\pm$ 0.000 & 0.591 $\pm$ 0.329 & 0.0032 $\pm$ 0.0037 & 0.0209 $\pm$ 0.0534 & 0.707 $\pm$ 0.150 & 0.0126 $\pm$ 0.0060 \\
QUCE          & 0.993 $\pm$ 0.086 & 0.780 $\pm$ 0.498 & 3.844 $\pm$ 0.375 & 0.000 $\pm$ 0.000 & 0.934 $\pm$ 0.158 & 0.0011 $\pm$ 0.0017 & \textbf{0.0002} $\pm$ \textbf{0.0010} & 0.946 $\pm$ 0.104 & 0.0084 $\pm$ 0.0040 \\
Schut et al.  & \textbf{1.000} $\pm$ \textbf{0.000} & 0.802 $\pm$ 0.201 & 5.203 $\pm$ 0.518 & 0.000 $\pm$ 0.000 & 0.997 $\pm$ 0.030 & 0.0007 $\pm$ 0.0006 & 0.0107 $\pm$ 0.0490 & \textbf{1.000} $\pm$ \textbf{0.002} & 0.0701 $\pm$ 0.0250 \\
\textbf{AVCG-B}        & \textbf{1.000} $\pm$ \textbf{0.000} & \textbf{0.407} $\pm$ \textbf{0.163} & \textbf{3.616} $\pm$ \textbf{0.123} & 0.670 $\pm$ 0.127 & \textbf{1.000} $\pm$ \textbf{0.000} & \textbf{0.0002} $\pm$ \textbf{0.0002} & 0.0218 $\pm$ 0.0195 & \textbf{1.000} $\pm$ \textbf{0.000} & \textbf{0.0011} $\pm$ \textbf{0.0001} \\
\textbf{AVCG-R ($\epsilon=0.8$)} & \textbf{1.000} $\pm$ \textbf{0.000} & 0.438 $\pm$ 0.194 & 3.646 $\pm$ 0.176 & \textbf{0.706} $\pm$ \textbf{0.140} & 0.996 $\pm$ 0.039 & \textbf{0.0002} $\pm$ \textbf{0.0003} & 0.0250 $\pm$ 0.0241 & 0.999 $\pm$ 0.011 & \textbf{0.0011} $\pm$ \textbf{0.0001} \\
\midrule
\multicolumn{10}{c}{\textit{Dataset: Spambase}} \\
\midrule
\multicolumn{10}{l}{\textbf{Environment: $\epsilon = 0.0$ (Strict Empirical Bound)}} \\
\midrule
Batten et al. & 0.926 $\pm$ 0.262 & 2.443 $\pm$ 2.198 & 9.521 $\pm$ 3.533 & 0.000 $\pm$ 0.000 & 0.755 $\pm$ 0.320 & 0.0159 $\pm$ 0.0203 & 0.0306 $\pm$ 0.1054 & 0.806 $\pm$ 0.268 & 0.0096 $\pm$ 0.0120 \\
QUCE          & 0.546 $\pm$ 0.498 & 2.084 $\pm$ 1.874 & 8.357 $\pm$ 2.860 & 0.000 $\pm$ 0.000 & 0.530 $\pm$ 0.416 & 0.0107 $\pm$ 0.0176 & \textbf{0.0024} $\pm$ \textbf{0.0096} & 0.541 $\pm$ 0.440 & 0.0118 $\pm$ 0.0138 \\
Schut et al.  & 0.990 $\pm$ 0.100 & 2.213 $\pm$ 1.742 & 9.878 $\pm$ 3.477 & 0.000 $\pm$ 0.000 & 0.773 $\pm$ 0.310 & 0.0127 $\pm$ 0.0279 & 0.0073 $\pm$ 0.0220 & 0.966 $\pm$ 0.120 & 0.0963 $\pm$ 0.1233 \\
\textbf{AVCG-B}        & \textbf{1.000} $\pm$ \textbf{0.000} & 0.983 $\pm$ 0.244 & \textbf{6.276} $\pm$ \textbf{0.091} & 0.776 $\pm$ 0.128 & \textbf{1.000} $\pm$ \textbf{0.000} & \textbf{0.0000} $\pm$ \textbf{0.0000} & 0.0271 $\pm$ 0.0295 & \textbf{1.000} $\pm$ \textbf{0.000} & 0.0011 $\pm$ 0.0001 \\
\textbf{AVCG-R ($\epsilon=0.0$)} & \textbf{1.000} $\pm$ \textbf{0.000} & \textbf{0.976} $\pm$ \textbf{0.199} & 6.301 $\pm$ 0.093 & \textbf{0.747} $\pm$ \textbf{0.149} & \textbf{1.000} $\pm$ \textbf{0.000} & \textbf{0.0000} $\pm$ \textbf{0.0000} & 0.0247 $\pm$ 0.0263 & \textbf{1.000} $\pm$ \textbf{0.000} & \textbf{0.0010} $\pm$ \textbf{0.0001} \\
\midrule
\multicolumn{10}{l}{\textbf{Environment: $\epsilon = 0.8$ (Relaxed Empirical Bound)}} \\
\midrule
Batten et al. & 0.916 $\pm$ 0.277 & 2.443 $\pm$ 2.198 & 9.521 $\pm$ 3.533 & 0.000 $\pm$ 0.000 & 0.755 $\pm$ 0.320 & 0.0161 $\pm$ 0.0240 & 0.0321 $\pm$ 0.0949 & 0.816 $\pm$ 0.183 & 0.0096 $\pm$ 0.0120 \\
QUCE          & 0.540 $\pm$ 0.498 & 2.084 $\pm$ 1.874 & 8.357 $\pm$ 2.860 & 0.000 $\pm$ 0.000 & 0.530 $\pm$ 0.416 & 0.0118 $\pm$ 0.0225 & \textbf{0.0028} $\pm$ \textbf{0.0130} & 0.544 $\pm$ 0.408 & 0.0118 $\pm$ 0.0138 \\
Schut et al.  & 0.990 $\pm$ 0.100 & 2.213 $\pm$ 1.742 & 9.878 $\pm$ 3.477 & 0.000 $\pm$ 0.000 & 0.773 $\pm$ 0.310 & 0.0118 $\pm$ 0.0252 & 0.0091 $\pm$ 0.0431 & 0.958 $\pm$ 0.103 & 0.0963 $\pm$ 0.1233 \\
\textbf{AVCG-B}        & \textbf{1.000} $\pm$ \textbf{0.000} & \textbf{0.983} $\pm$ \textbf{0.244} & \textbf{6.276} $\pm$ \textbf{0.091} & \textbf{0.776} $\pm$ \textbf{0.128} & \textbf{1.000} $\pm$ \textbf{0.000} & \textbf{0.0000} $\pm$ \textbf{0.0000} & 0.0266 $\pm$ 0.0298 & \textbf{1.000} $\pm$ \textbf{0.000} & \textbf{0.0011} $\pm$ \textbf{0.0001} \\
\textbf{AVCG-R ($\epsilon=0.8$)} & \textbf{1.000} $\pm$ \textbf{0.000} & 1.037 $\pm$ 0.238 & 6.332 $\pm$ 0.087 & 0.690 $\pm$ 0.108 & \textbf{1.000} $\pm$ \textbf{0.000} & \textbf{0.0000} $\pm$ \textbf{0.0000} & 0.0208 $\pm$ 0.0215 & \textbf{1.000} $\pm$ \textbf{0.000} & \textbf{0.0011} $\pm$ \textbf{0.0001} \\
\bottomrule
\end{tabular}
}
\end{table*}

Intuitively, minimal predictive variance within the defined model space leads to highly robust counterfactual instances.

\begin{example}[Empirical Example]
Taking the Rashomon set at $\epsilon = 0.10$ on the Adult Income dataset yields a measured predictive variance of $\sim0.002$, assume then AVCG-R produces a highly confident counterfactual such that $\mathbb{E}_{\theta\sim P_R}[f_\theta(\mathbf{x}^\prime)] = 0.95$. Given a new predictive model can be interpreted as an approximate draw from $P_{R}(\theta)$, we seek the probability that the counterfactual remains valid ($ f_{\theta^*}(\mathbf{x}^\prime) > 0.5$). Setting $\beta = 0.45$, we have: 
\begin{align*}
                &\mathbb{P}_{\theta^* \sim P_R} \Big( f_{\theta^*}(\mathbf{x}^\prime) > 0.95 - 0.45 \Big) \geq 1-\frac{0.002}{0.45^2} \\
                &\implies \mathbb{P}_{\theta^* \sim P_R} \Big( f_{\theta^*}(\mathbf{x}^\prime) > 0.5 \Big) \geq 0.99
\end{align*}
Thus, with a $99\%$ probability, a counterfactual evaluated by a newly sampled model $\theta^*$ drawn from the Rashomon set will remain valid.
\end{example}

\subsection{Computing AVCG}
We proceed by defining the implementation of our proposed AVCG. 

We derive gradients for the predictive term noting that gradients of the KL and proximity terms follow standard formulations and are computed via automatic differentiation. Thus, by the chain rule, we are optimizing our counterfactual generative model $g_{\phi}$ under the predictor $P_{\theta}$ over parameters $\theta$ via the computational graph defined by:
\begin{align*}
    \mathbb{E}_{\theta \sim P_\Phi(\theta), \mathbf{z} \sim q_{\psi}}\bigg[ \frac{\partial \ log\ P_{\theta}}{\partial \mathbf{x}^\prime} \cdot \frac{\partial g_{\phi}}{\partial \phi} \bigg],
\end{align*}
where $P_{\theta}$ is defined by a pre-trained Bayesian predictor.

To enable gradient propagation through stochastic latent samples, we adopt the reparameterisation trick \cite{DBLP:journals/corr/KingmaW13}, such that: 
\begin{align*}
    \mathbf{z} = \mu \ + \sigma \odot \kappa, \ \kappa \sim {N}(0, I). 
\end{align*}

Furthermore, the encoder is updated via the path: 
\begin{align*}
    \mathbb{E}_{\theta \sim P_\Phi(\theta)}\bigg[\mathbb{E}_{\mathbf{z} \sim q_{\psi}}\bigg[  \frac{\partial \ log\ P_{\theta}}{\partial \mathbf{x}^\prime} \cdot \frac{\partial g_{\phi}}{\partial \mathbf{z}} \cdot \frac{\partial \mathbf{z}}{\partial \psi} \bigg]\bigg].
\end{align*}
where using the reparameterization trick, gradients can propagate through $\mathbf{z}$ with respect to the encoders parameters $\psi$. Implementation details are provided in the \emph{supplementary material}.

\section{Experimental Setup}
We use a validation set for determining which models belong in the Rashomon set, and then utilize the test set for evaluating the quality of generated counterfactuals across performance metrics. For each dataset, we first split our data into 80\% training and 20\% test. We then take 15\% of our training data as a hold-out validation set. All experiments are run across 5 seeds across 100 instances per dataset, with the mean and standard deviation recorded. 

Experiments were conducted on a workstation equipped with an Intel Core i9-14900 CPU and an NVIDIA RTX 2000 Ada Generation GPU.

\subsection{Datasets}\label{sec:datasets}
For the experiments in this work, we focus on four benchmark datasets. We adopt the tabular datasets, namely the adult income dataset \cite{adult_2}, heart disease \cite{heart_disease_45}, Wisconsin breast cancer \cite{breast_cancer_wisconsin_diagnostic_17} and spambase dataset \cite{spambase_94} used frequently in counterfactual literature \cite{batten2025uncertaintyaware, Schut2021GeneratingIC, 10.5555/3709347.3743791}.

\subsection{Baselines}
We compare our proposed AVCG approach against two Bayesian-based counterfactual explainers: Schut et al. \cite{Schut2021GeneratingIC} and Batten et al. \cite{batten2025uncertaintyaware}. Finally, we compare against QUCE \cite{Duell_2024}, but we instead operate over the expected prediction as opposed to a single point estimate.   

\subsection{Metrics}\label{sec:metrics}

In this section, we formally define the evaluation metrics utilized in our experiments to assess the efficacy, plausibility, and stability of the generated counterfactuals. Let $\mathbf{x} \in \mathbb{R}^d$ be the factual instance, and let $\mathbf{x}^\prime \in \mathbb{R}^d$ be the generated counterfactual instance targeting class $y^\prime$. Let $y_{orig} = \arg\max_c \mathbb{E}_{\theta \sim P(\theta \mid D)}[f_\theta(\mathbf{x})_c]$ represent the original predicted class, where $c \in \{1, \ldots, C\}$ denotes a class. We denote the indicator function as $\mathbb{I}[\cdot]$, and a small constant for numerical stability as $\eta$. In practice the evaluation metrics are averaged over instances.

In this work we evaluate Validity (Val) \cite{Mothilal_2020}, IM1 \cite{10.1007/978-3-030-86520-7_40, Schut2021GeneratingIC}, Implausibility (Imp) \cite{energycf}, Diversity \cite{Mothilal_2020}, Robustness to Noisy Executions (NE) \cite{10.24963/ijcai.2024/894}, Robustness to Input Changes (R$_{IC}$) \cite{batten2025uncertaintyaware} and inference time. Furthermore, we propose two metrics, Cross Model Validity (CMV) evaluating the counterfactual validity across newly trained neural networks, and Rashomon Validity Ratio (RVR) the counterfactuals valid under a set of models defined by $\epsilon$. We provide the details for known metrics in the supplementary, and introduce our metrics as follows:



\textbf{Rashomon Validity Ratio (RVR)} quantifies the counterfactual's robustness specifically to model multiplicity. It evaluates the fraction of valid counterfactuals across the empirically derived Rashomon set (as per validation set evaluation), $\Theta_R(\epsilon)$:
$$\text{RVR}(\mathbf{x}^\prime, y^\prime, \epsilon) = \frac{1}{\vert\Theta_R(\epsilon)\vert} \sum_{\theta_k \in \Theta_R(\epsilon)} \mathbb{I} \left[ \arg\max_{c} f_{\theta_k}(\mathbf{x}^\prime)_c = y^\prime \right].$$

\textbf{Cross Model Validity (CMV)} evaluates the transferability of the counterfactual by testing it against a multiple surrogate models, $f_S$, trained on the identical data distribution:
$$\text{CMV}(\mathbf{x}^\prime, y^\prime) = \frac{1}{N} \sum^N_{s=1}\mathbb{I} \left[ \arg\max_{c} f_S(\mathbf{x}^\prime)_c = y^\prime \right].$$
for $N$ surrogate models. In this work we evaluate the average validity across $N=5$ separately trained models.

\section{Results}

\begin{figure*}[h]
    \centering
    \includegraphics[width=\textwidth, height=0.65\textheight, keepaspectratio]{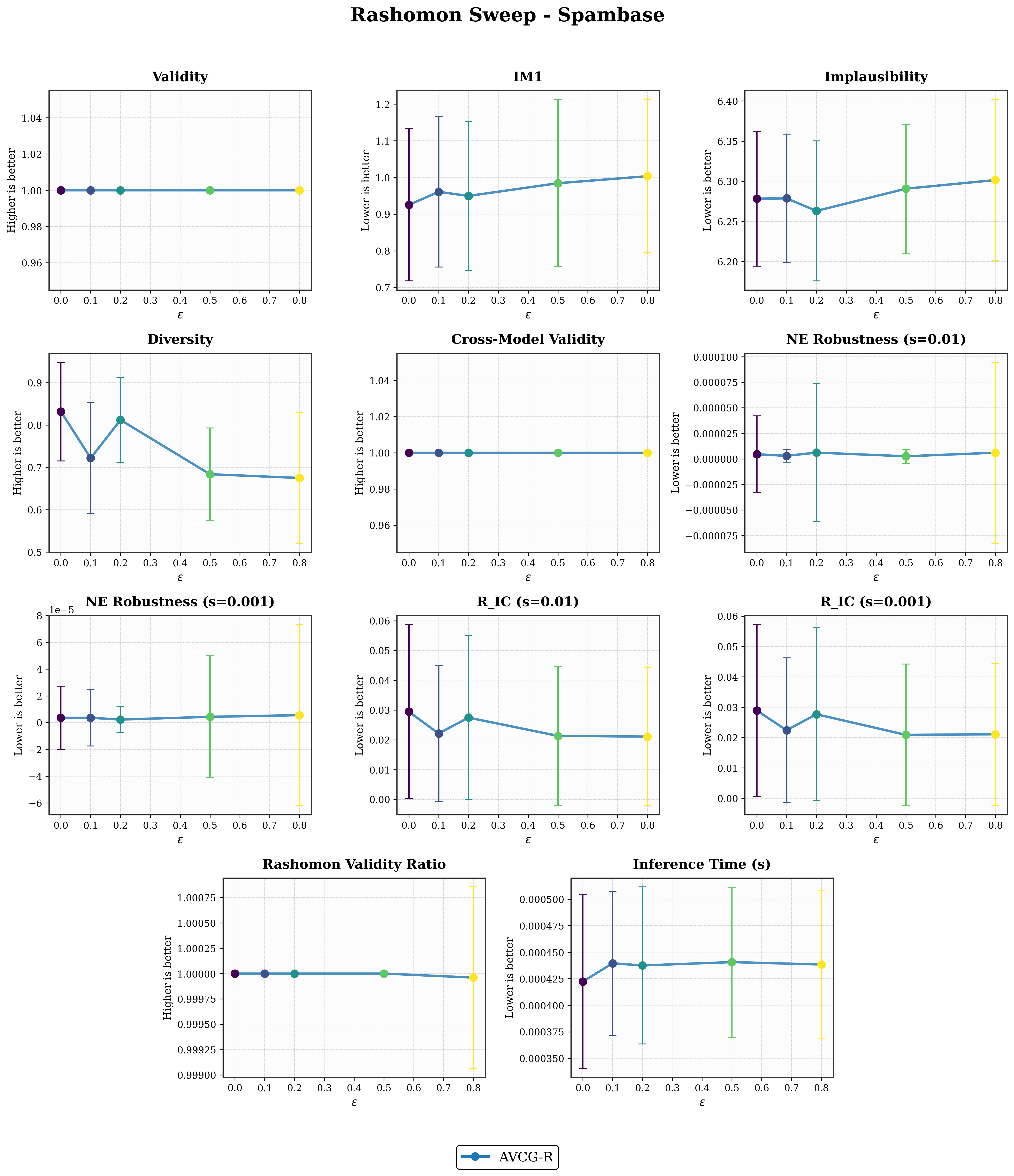}
    \caption{$\epsilon$-sweep on the Spambase dataset for AVCG-R. Here we observe the impact of the $\epsilon$ hyperparameter for the Rashomon set.}
    \label{fig:epsilon_sweep12}
\end{figure*}

\subsection{Results on Counterfactual Metrics}
\subsubsection{Robustness Under Predictive Uncertainty}
The primary objective of the AVCG framework is to generate counterfactuals that remain valid across a distribution of plausible predictive hypotheses. As evidenced by the Cross-Model Validity (CMV) and Rashomon Validity Ratio (RVR) in Tables \ref{tab:results_adult_breast1} and \ref{tab:results_heart_spam1}, standard post-hoc iterative methods are highly susceptible to model multiplicity. Several post-hoc methods exhibit substantial reductions in CMV on particular datasets. For instance, on the Spambase dataset (Table \ref{tab:results_heart_spam1}), Schut et al. achieves a CMV of only 0.773 and Batten et al. drops to 0.755. Furthermore, we observe that the RVR for non-AVCG methods drops. In contrast, both AVCG-B and AVCG-R maintain a cross-model validity ($\sim1.000$) and RVR ($\sim1.000$) across all four datasets. This is empirically consistent with Theorem~\ref{thrm:stability} and Corollary \ref{central_corol}.

\subsubsection{Amortization: Diversity and Inference Efficiency}
Beyond robustness, formulating counterfactual generation as a variational amortized process yields two practical advantages: inference speed and explanation diversity. First, the post-hoc baselines suffer from collapsing to  single solution, uniformly recording a Diversity score of $0.000$ across all datasets. Because they iteratively solve a deterministic optimization problem per instance, they collapse to a single, localized solution. AVCG, operating as a conditional generative model, maps inputs to a stochastic latent space $q_\psi(\mathbf{z} \mid y^\prime, \mathbf{x})$. By sampling different latent vectors $\mathbf{z}$, AVCG generates a diverse set of valid counterfactuals (e.g., Diversity of $1.078$ on Breast Cancer and $0.747$ on Spambase for AVCG-R), offering users multiple actionable paths for recourse. Secondly, the computational bottleneck of counterfactual deployment is inference latency. Iterative gradient-based baselines require significant time per instance (e.g., Schut et al. requires $0.04$ seconds per instance on Adult Income). By shifting the heavy computational burden to the training phase, AVCG requires only a single forward pass at inference time, achieving generation speeds on the order of $10^{-3}$ seconds which is an efficiency improvement of over an order of magnitude, making it highly suitable for real-time deployment. 

\subsection{$\epsilon$ Sweep}
In Figure~\ref{fig:epsilon_sweep12} we demonstrate the influence of the $\epsilon$ hyperparameter recording the mean and deviation across 5 runs on each metric under different Rashomon set sizes. The result demonstrates that AVCG-R remains stable across varying Rashomon set sizes, indicating that the framework is robust to the practitioner's choice of the empirical performance threshold $\epsilon$. 

We observe common trends, such as diversity commonly increasing across datasets as $\epsilon$ decreases. This is likely due to the higher epsilon which admits more models that need to evaluate a counterfactual as valid, and thus, limiting the space of potential counterfactual solutions. A similar consequence is likely for the IM1 metric as the constraints of validity become stricter.

We refer the reader to the \emph{supplementary material} for \emph{ablation studies}, additional baselines and further experiments on \emph{image datasets} (MNIST and CIFAR10).

\section{Conclusion}
We introduced AVCG, a generalized variational framework for amortized counterfactual generation over arbitrary hypothesis distributions. Bayesian posteriors and Rashomon-restricted hypothesis spaces arise as special cases. AVCG demonstrates superior performance across key metrics with the added value of amortized inference for a decreased computational cost.

An interesting direction for future work is to characterize the amortization gap induced by Rashomon-restricted inference, whereby the amortization gap is a known concern in VAE-like models \cite{Cremer2018InferenceSI}. Since the proposed generator amortizes counterfactual inference across both inputs and hypothesis-space, mismatch between the instance-optimal latent representations and the shared encoder may emerge as the Rashomon set evolves. Building on analyses of amortization error in variational inference, future work could investigate adaptive refinement strategies \cite{pmlr-v80-kim18e} that selectively update the encoder when hypothesis-space variability induces large deviations from instance-optimal counterfactual latent representations.

Future work can explore extrapolations of our generalized framework to further hypothesis spaces. For example, one could use a Laplace approximation around the maximum a posteriori (MAP) solution. 

\section*{Acknowledgments}
For the purpose of open access, the author has applied a Creative Commons Attribution (CC BY) licence to any Author Accepted Manuscript version of this paper, arising from this submission.

\bibliography{aaai2027}

@InProceedings{10.1007/978-3-030-86520-7_40,
author="Van Looveren, Arnaud
and Klaise, Janis",
editor="Oliver, Nuria
and P{\'e}rez-Cruz, Fernando
and Kramer, Stefan
and Read, Jesse
and Lozano, Jose A.",
title="Interpretable Counterfactual Explanations Guided by Prototypes",
booktitle="Machine Learning and Knowledge Discovery in Databases. Research Track",
year="2021",
publisher="Springer International Publishing",
address="Cham",
pages="650--665",
isbn="978-3-030-86520-7"
}

@misc{breast_cancer_wisconsin_diagnostic_17,
  author       = {Wolberg, William and Mangasarian, Olvi and Street, Nick and Street, W.},
  title        = {{Breast Cancer Wisconsin (Diagnostic)}},
  year         = {1993},
  howpublished = {UCI Machine Learning Repository},
  note         = {{DOI}: https://doi.org/10.24432/C5DW2B}
}

@misc{heart_disease_45,
  author       = {Janosi, Andras and Steinbrunn, William and Pfisterer, Matthias and Detrano, Robert},
  title        = {{Heart Disease}},
  year         = {1989},
  howpublished = {UCI Machine Learning Repository},
  note         = {{DOI}: https://doi.org/10.24432/C52P4X}
}

@String{Computing = "Computing" }

@String{Computer = "{IEEE} Computer" }

@String{Springer = "Springer-Verlag" }

@inproceedings{Schut2021GeneratingIC,
  title={Generating Interpretable Counterfactual Explanations By Implicit Minimisation of Epistemic and Aleatoric Uncertainties},
  author={Lisa Schut and Oscar Key and Rory McGrath and Luca Costabello and Bogdan Sacaleanu and Medb Corcoran and Yarin Gal},
  booktitle={International Conference on Artificial Intelligence and Statistics},
  year={2021},
  url={https://api.semanticscholar.org/CorpusID:232240104}
}

@inproceedings{10.5555/3709347.3743791,
author = {Noorani, Erfaun and Dissanayake, Pasan and Hamman, Faisal and Dutta, Sanghamitra},
title = {Counterfactual Explanations for Model Ensembles Using Entropic Risk Measures},
year = {2025},
isbn = {9798400714269},
publisher = {International Foundation for Autonomous Agents and Multiagent Systems},
address = {Richland, SC},
booktitle = {Proceedings of the 24th International Conference on Autonomous Agents and Multiagent Systems},
pages = {1566–1575},
numpages = {10},
location = {Detroit, MI, USA},
series = {AAMAS '25}
}

@inproceedings{DBLP:journals/corr/KingmaW13,
  author       = {Diederik P. Kingma and
                  Max Welling},
  editor       = {Yoshua Bengio and
                  Yann LeCun},
  title        = {Auto-Encoding Variational Bayes},
  booktitle    = {2nd International Conference on Learning Representations, {ICLR} 2014,
                  Banff, AB, Canada, April 14-16, 2014, Conference Track Proceedings},
  year         = {2014},
  url          = {http://arxiv.org/abs/1312.6114},
  bibsource    = {dblp computer science bibliography, https://dblp.org}
}

@article{wachter,
author = {Wachter, Sandra and Mittelstadt, Brent and Russell, Chris},
year = {2018},
month = {04},
pages = {841-887},
title = {Counterfactual Explanations Without Opening the Black Box: Automated Decisions and the GDPR},
volume = {31},
journal = {Harvard journal of law \& technology},
doi = {10.2139/ssrn.3063289}
}

@misc{sokol2025needcounterfactualexplainabilityprincipled,
      title={All You Need for Counterfactual Explainability Is Principled and Reliable Estimate of Aleatoric and Epistemic Uncertainty}, 
      author={Kacper Sokol and Eyke Hüllermeier},
      year={2025},
      eprint={2502.17007},
      archivePrefix={arXiv},
      primaryClass={cs.LG},
      url={https://arxiv.org/abs/2502.17007}, 
}

@misc{
batten2025uncertaintyaware,
title={Uncertainty-Aware Counterfactual Explanations using Bayesian Neural Nets},
author={Ben Batten and Francesco Leofante and Nicola Paoletti and Alessio Lomuscio and Mehran Hosseini},
year={2025},
url={https://openreview.net/forum?id=XAO5pulJru}
}

@inproceedings{Mothilal_2020, series={FAT* ’20},
   title={Explaining machine learning classifiers through diverse counterfactual explanations},
   url={http://dx.doi.org/10.1145/3351095.3372850},
   DOI={10.1145/3351095.3372850},
   booktitle={Proceedings of the 2020 Conference on Fairness, Accountability, and Transparency},
   publisher={ACM},
   author={Mothilal, Ramaravind K. and Sharma, Amit and Tan, Chenhao},
   year={2020},
   month=jan, pages={607–617},
   collection={FAT* ’20} }

@InProceedings{pmlr-v80-kim18e,
  title = 	 {Semi-Amortized Variational Autoencoders},
  author =       {Kim, Yoon and Wiseman, Sam and Miller, Andrew and Sontag, David and Rush, Alexander},
  booktitle = 	 {Proceedings of the 35th International Conference on Machine Learning},
  pages = 	 {2678--2687},
  year = 	 {2018},
  editor = 	 {Dy, Jennifer and Krause, Andreas},
  volume = 	 {80},
  series = 	 {Proceedings of Machine Learning Research},
  month = 	 {10--15 Jul},
  publisher =    {PMLR},
  url = 	 {https://proceedings.mlr.press/v80/kim18e.html}
}

@inproceedings{Cremer2018InferenceSI,
  title={Inference Suboptimality in Variational Autoencoders},
  author={Chris Cremer and Xuechen Li and David Kristjanson Duvenaud},
  booktitle={International Conference on Machine Learning},
  year={2018},
  url={https://api.semanticscholar.org/CorpusID:3524184}
}

@misc{spambase_94,
  author       = {Hopkins, Mark and Reeber, Erik and Forman, George and Suermondt, Jaap},
  title        = {{Spambase}},
  year         = {1999},
  howpublished = {UCI Machine Learning Repository},
  note         = {{DOI}: https://doi.org/10.24432/C53G6X}
}

@misc{adult_2,
  author       = {Becker, Barry and Kohavi, Ronny},
  title        = {{Adult}},
  year         = {1996},
  howpublished = {UCI Machine Learning Repository},
  note         = {{DOI}: https://doi.org/10.24432/C5XW20}
}

@inproceedings{DBLP:conf/iclr/HsuLH024,
  author       = {Hsiang Hsu and
                  Guihong Li and
                  Shaohan Hu and
                  Chun{-}Fu Chen},
  title        = {Dropout-Based Rashomon Set Exploration for Efficient Predictive Multiplicity
                  Estimation},
  booktitle    = {The Twelfth International Conference on Learning Representations,
                  {ICLR} 2024, Vienna, Austria, May 7-11, 2024},
  publisher    = {OpenReview.net},
  year         = {2024},
  url          = {https://openreview.net/forum?id=Sf2A2PUXO3},
  bibsource    = {dblp computer science bibliography, https://dblp.org}
}

@inproceedings{10.5555/3045390.3045502,
author = {Gal, Yarin and Ghahramani, Zoubin},
title = {Dropout as a Bayesian approximation: representing model uncertainty in deep learning},
year = {2016},
publisher = {JMLR.org},
booktitle = {Proceedings of the 33rd International Conference on International Conference on Machine Learning - Volume 48},
pages = {1050–1059},
numpages = {10},
location = {New York, NY, USA},
series = {ICML'16}
}

@inproceedings{10.5555/3045118.3045290,
author = {Blundell, Charles and Cornebise, Julien and Kavukcuoglu, Koray and Wierstra, Daan},
title = {Weight uncertainty in neural networks},
year = {2015},
publisher = {JMLR.org},
booktitle = {Proceedings of the 32nd International Conference on Machine Learning - Volume 37},
pages = {1613–1622},
numpages = {10},
location = {Lille, France},
series = {ICML'15}
}

@inproceedings{NIPS2015_8d55a249,
 author = {Sohn, Kihyuk and Lee, Honglak and Yan, Xinchen},
 booktitle = {Advances in Neural Information Processing Systems},
 editor = {C. Cortes and N. Lawrence and D. Lee and M. Sugiyama and R. Garnett},
 pages = {},
 publisher = {Curran Associates, Inc.},
 title = {Learning Structured Output Representation using Deep Conditional Generative Models},
 url = {https://proceedings.neurips.cc/paper_files/paper/2015/file/8d55a249e6baa5c06772297520da2051-Paper.pdf},
 volume = {28},
 year = {2015}
}

@inproceedings{10.24963/ijcai.2024/894,
author = {Jiang, Junqi and Leofante, Francesco and Rago, Antonio and Toni, Francesca},
title = {Robust counterfactual explanations in machine learning: a survey},
year = {2024},
isbn = {978-1-956792-04-1},
url = {https://doi.org/10.24963/ijcai.2024/894},
doi = {10.24963/ijcai.2024/894},
booktitle = {Proceedings of the Thirty-Third International Joint Conference on Artificial Intelligence},
articleno = {894},
numpages = {9},
location = {Jeju, Korea},
series = {IJCAI '24}
}

@ArtifactSoftware{R,
    title = {R: A Language and Environment for Statistical Computing},
    author = {{R Core Team}},
    organization = {R Foundation for Statistical Computing},
    address = {Vienna, Austria},
    year = {2019},
    url = {https://www.R-project.org/},
}

@inproceedings{
hsu2026the,
title={The Rashomon Set Has It All: Analyzing Trustworthiness of Trees under Multiplicity},
author={Ethan Hsu and Tony Cao and Lesia Semenova and Chudi Zhong},
booktitle={The Thirty-ninth Annual Conference on Neural Information Processing Systems Datasets and Benchmarks Track},
year={2025},
url={https://openreview.net/forum?id=RXsDPn3toF}
}

@article{energycf,
author = {Altmeyer, Patrick and Farmanbar, Mojtaba and Deursen, Arie and Liem, Cynthia},
year = {2024},
month = {03},
pages = {10829-10837},
title = {Faithful Model Explanations through Energy-Constrained Conformal Counterfactuals},
volume = {38},
journal = {Proceedings of the AAAI Conference on Artificial Intelligence},
doi = {10.1609/aaai.v38i10.28956}
}

@inproceedings{Duell_2024,
   title={QUCE: The Minimisation and Quantification of Path-Based Uncertainty for Generative Counterfactual Explanations},
   url={http://dx.doi.org/10.1109/ICDM59182.2024.00078},
   DOI={10.1109/icdm59182.2024.00078},
   booktitle={2024 IEEE International Conference on Data Mining (ICDM)},
   publisher={IEEE},
   author={Duell, Jamie and Seisenberger, Monika and Fu, Hsuan and Fan, Xiuyi},
   year={2024},
   month=dec, pages={693–698} }

\newpage
\onecolumn

\appendix

\section{Appendix}
\subsection{Proof of Theorem 1}
\begin{proof}
Since $R_{\Phi}(\mathbf{x}^\prime)$ is the only term dependent on $\theta$, we reduce the difference to the terms
 \begin{align*}
    &\mathbb{E}_{q_{\psi}(\mathbf{z} \mid y^\prime, \mathbf{x})}
    \bigg[
    \mathbb{E}_{\theta \sim P^a_{\Phi}(\theta)}
    \bigg[
    \log P_{\theta}(y^\prime \mid g_{\phi}(\mathbf{x}, y^\prime, \mathbf{z}))
    \bigg]
    \bigg] \\
    &-
    \mathbb{E}_{q_{\psi}(\mathbf{z} \mid y^\prime, \mathbf{x})}
    \bigg[
    \mathbb{E}_{\theta \sim P^b_{\Phi}(\theta)}
    \bigg[
    \log P_{\theta}(y^\prime \mid g_{\phi}(\mathbf{x}, y^\prime, \mathbf{z}))
    \bigg]
    \bigg] \\
    &= L_{a}(\mathbf{x}^\prime)-L_{b}(\mathbf{x}^\prime)\\
    &=
    \int
    q_{\psi}(\mathbf{z}\mid y^\prime,\mathbf{x})
    \left(
    \int
    P^a_{\Phi}(\theta)
    \log P_{\theta}(y^\prime \mid g_{\phi}(\mathbf{x},y^\prime,\mathbf{z}))
    \,d\theta
    \right)
    d\mathbf{z}
    \\
    &\quad-
    \int
    q_{\psi}(\mathbf{z}\mid y^\prime,\mathbf{x})
    \left(
    \int
    P^b_{\Phi}(\theta)
    \log P_{\theta}(y^\prime \mid g_{\phi}(\mathbf{x},y^\prime,\mathbf{z}))
    \,d\theta
    \right)
    d\mathbf{z}
    \\
    &=
    \int
    q_{\psi}(\mathbf{z}\mid y^\prime,\mathbf{x})
 \left(
    \int
    \left(
    P^a_{\Phi}(\theta)-P^b_{\Phi}(\theta)
    \right)
    \log P_{\theta}(y^\prime \mid g_{\phi}(\mathbf{x},y^\prime,\mathbf{z}))
    \,d\theta
    \right)
    d\mathbf{z}.
\end{align*}
Taking the absolute values we have: 
\begin{align*}
    \vert L_{a}(\mathbf{x}^\prime)-L_{b}(\mathbf{x}^\prime) \vert &= \bigg\vert \int
    q_{\psi}(\mathbf{z}\mid y^\prime,\mathbf{x})
    \left(
    \int
    \left(
    P^a_{\Phi}(\theta)-P^b_{\Phi}(\theta)
    \right)
    \log P_{\theta}(y^\prime \mid g_{\phi}(\mathbf{x},y^\prime,\mathbf{z}))
    \,d\theta
    \right)
    d\mathbf{z}\bigg\vert 
    \\ &\leq \int
    q_{\psi}(\mathbf{z}\mid y^\prime,\mathbf{x})
     \left(
    \int
    \bigg\vert \left(
    P^a_{\Phi}(\theta)-P^b_{\Phi}(\theta)
    \right)
    \log P_{\theta}(y^\prime \mid g_{\phi}(\mathbf{x},y^\prime,\mathbf{z}))
    \,\bigg\vert  d\theta
    \right)
    d\mathbf{z}
    \\
    &\leq \int
    q_{\psi}(\mathbf{z}\mid y^\prime,\mathbf{x})
     \left(
   \int
    \bigg\vert \left(
    P^a_{\Phi}(\theta)-P^b_{\Phi}(\theta)
    \right)\vert 
    \vert \log P_{\theta}(y^\prime \mid g_{\phi}(\mathbf{x},y^\prime,\mathbf{z}))
    \,\bigg\vert  d\theta
    \right)
    d\mathbf{z}
\end{align*}
let $\vert \log P_{\theta}(y^\prime \mid g_{\phi}(\mathbf{x},y^\prime,\mathbf{z}))\vert \leq M$ then we have: 
\begin{align*}
    \int
    q_{\psi}(\mathbf{z}\mid y^\prime,\mathbf{x})
    \left(M
    \int
    \bigg\vert \left(
    P^a_{\Phi}(\theta)-P^b_{\Phi}(\theta)
    \right)
    \bigg\vert  d\theta
    \right)
    d\mathbf{z}
\end{align*}

    Using the definition of total variation $TV(P,Q) = \frac{1}{2}\int \vert P - Q\vert $ for two distribution $P$ and $Q$, then we obtain: 
\begin{align*}
    \vert L_{a}(\mathbf{x}^\prime)-L_{b}(\mathbf{x}^\prime) \vert &\leq 2 M
    TV(P^a_{\Phi}, P^b_{\Phi})
    \int
    q_{\psi}(\mathbf{z}\mid y^\prime,\mathbf{x})
    d\mathbf{z} \\
    &= 2 M
    TV(P^a_{\Phi}, P^b_{\Phi})
\end{align*}
since $\int
    q_{\psi}(\mathbf{z}\mid y^\prime,\mathbf{x})
    d\mathbf{z}= 1$, applying Pinskers inequality we have: 
    \begin{align*}
    &2 M
    TV(P^a_{\Phi}, P^b_{\Phi}) \leq 2M\sqrt{\frac{1}{2}D_{KL}(P^a_{\Phi} \vert\vert P^b_{\Phi})}  \\
    &\implies \vert L_{a}(\mathbf{x}^\prime)-L_{b}(\mathbf{x}^\prime) \vert \leq \sqrt{2}M\sqrt{D_{KL}(P^a_{\Phi} \vert\vert P^b_{\Phi})}
    \end{align*}
    completing the proof.
\end{proof}

\subsection{Proof of Proposition 1}
\begin{proof}
Let $X = 1- f_{\theta}(\mathbf{x}^\prime)$ and $\alpha = 1-\gamma$ be a random variable over $\theta$ s.t. $X \geq 0$. Applying Markov's inequality:
\begin{align*}
    \mathbb{P}(X \geq \alpha) \leq \frac{\mathbb{E}[X]}{\alpha}
\end{align*}
Substituting terms yields:
\begin{align*}
    \mathbb{P}(1- f_{\theta}(\mathbf{x}^\prime) \geq 1-\gamma) \leq\frac{1-\mathbb{E}_{\theta \sim P_{\Phi}(\theta)}[f_{\theta}(\mathbf{x}^\prime)]}{1 -\gamma} 
\end{align*}
Taking the complement, we observe: 
\begin{align*}
    \mathbb{P}(f_{\theta}(\mathbf{x}^\prime) \geq \gamma) \geq 1- \frac{1-\mathbb{E}_{\theta \sim P_{\Phi}(\theta)}[f_{\theta}(\mathbf{x}^\prime)]}{1 -\gamma}.
\end{align*}
\end{proof}

\subsection{Proof of Corollary 1}
\begin{proof}
    Chebyshev's inequality: 
    \begin{align*}
\mathbb{P}_{\theta \sim P_\Phi} \Big( \vert  f_{\theta}(\mathbf{x}^\prime) - \mathbb{E}_{\theta\sim P_\Phi}[f_\theta(\mathbf{x}^\prime)] \vert < \beta \Big) \ge 1-\frac{\sigma^2}{\beta^2}.
\end{align*}
    then, we observe that: 
    \begin{align*}
        &\vert f_{\theta}(\mathbf{x}^\prime) - \mathbb{E}_{\theta\sim P_R}[f_\theta(\mathbf{x}^\prime)] \vert < \beta \\ &\implies -\beta < f_{\theta}(\mathbf{x}^\prime) - \mathbb{E}_{\theta\sim P_R}[f_\theta(\mathbf{x}^\prime)] < \beta
        \\ &\implies -\beta + \mathbb{E}_{\theta\sim P_R}[f_\theta(\mathbf{x}^\prime)]< f_{\theta}(\mathbf{x}^\prime) < \beta + \mathbb{E}_{\theta\sim P_R}[f_\theta(\mathbf{x}^\prime)]
    \end{align*}
    By isolating the LHS of the inequality and substituting $\theta$ for $\theta^*$, under the assumption $\theta^* \sim P_{R}(\theta)$: 
    \begin{align*}
        f_{\theta^*}(\mathbf{x}^\prime) > \mathbb{E}_{\theta^* \sim P_{R}(\theta)}[f_\theta(\mathbf{x}^\prime)] - \beta, 
    \end{align*}
    and thus:
    \begin{align*}
            \mathbb{P}_{\theta^* \sim P_{R}(\theta)} \Big( f_{\theta^*}(\mathbf{x}^\prime) > \mathbb{E}_{\theta^* \sim P_{R}(\theta)}[f_\theta(\mathbf{x}^\prime)] - \beta \Big) \ge 1-\frac{\sigma^2}{\beta^2}.
    \end{align*}
\end{proof}

\subsection{Related Preliminaries}
Our AVCG approach is naturally inspired by the architectures of Variational Autoencoders (VAE) \cite{DBLP:journals/corr/KingmaW13} and Conditional Variational Autoencoders (CVAE) \cite{NIPS2015_8d55a249}. For completeness, we show the derived Evidence Lower Bound (ELBO) for each of the methods.   

\subsection{Variational Autoencoder}
The Variational Autoencoder (VAE) \cite{DBLP:journals/corr/KingmaW13} is a foundational approach for amortized inference in generative models.  The premise of the VAE is to maximise the log-evidence of an instance $\mathbf{x}$ (log $p(\mathbf{x})$), this is mathematically expressed and optimised for via the Evidence Lower Bound (ELBO) as follows: 
\begin{align*}
    \text{log }p(\mathbf{x}) &= \text{log }\int p(\mathbf{x} \mid \mathbf{z})p(\mathbf{z})d\mathbf{z} \\ &= \text{log } \int q(\mathbf{z} \mid \mathbf{x})\frac{p(\mathbf{x} \mid \mathbf{z})p(\mathbf{z})}{q(\mathbf{z} \mid \mathbf{x})}d\mathbf{z}
    \\ &\geq \int q(\mathbf{z} \mid \mathbf{x}) \text{log }\frac{p(\mathbf{x} \mid \mathbf{z})p(\mathbf{z})}{q(\mathbf{z} \mid \mathbf{x})}d\mathbf{z}
    \\ &= \mathbb{E}_{q(\mathbf{z} \mid \mathbf{x})}\bigg[ \text{log } p(\mathbf{x} \mid \mathbf{z}) \bigg] - D_{KL}\bigg( q(\mathbf{z} \vert \mathbf{x}) \mid\mid p(\mathbf{z})\bigg).  
\end{align*}

\subsection{Conditional Variational Autoencoder}
The Conditional Variational Autoencoder (CVAE) \cite{NIPS2015_8d55a249} imposes a conditional constraint $\mathbf{c}$, such that: 
\begin{align*}
    \text{log }p(\mathbf{x} \mid \mathbf{c}) &= \text{log }\int p(\mathbf{x} \mid \mathbf{z}, \mathbf{c})p(\mathbf{z} \mid \mathbf{c})d\mathbf{z} \\ &= \text{log } \int q(\mathbf{z} \mid \mathbf{x},\mathbf{c})\frac{p(\mathbf{x} \mid \mathbf{z}, \mathbf{c})p(\mathbf{z} \mid \mathbf{c})}{q(\mathbf{z} \mid \mathbf{x}, \mathbf{c})}d\mathbf{z}
    \\ &\geq \int q(\mathbf{z} \mid \mathbf{x}, \mathbf{c}) \text{log }\frac{p(\mathbf{x} \mid \mathbf{z}, \mathbf{c})p(\mathbf{z} \mid \mathbf{c})}{q(\mathbf{z} \mid \mathbf{x}, \mathbf{c})}d\mathbf{z}
    \\ &= \mathbb{E}_{q(\mathbf{z} \mid \mathbf{x}, \mathbf{c})}\bigg[ \text{log } p(\mathbf{x} \mid \mathbf{z}, \mathbf{c}) \bigg] \\ &- D_{KL}\bigg( q(\mathbf{z} \vert \mathbf{x}, \mathbf{c}) \mid\mid p(\mathbf{z} \mid \mathbf{c})\bigg).  
\end{align*}

\subsection{Metrics}
\textbf{Validity} measures whether the generated counterfactual successfully alters the prediction of the primary Bayesian predictor, such that:
$$\text{Validity}(\mathbf{x}^\prime, y^\prime) = \mathbb{I} \left[ \arg\max_{c} \mathbb{E}_{\theta \sim P(\theta \mid D)}[f_\theta(\mathbf{x}^\prime)_c] = y^\prime \right].$$
\textbf{Implausibility (Imp)} calculates the average Euclidean distance from the generated counterfactual to all training instances belonging to the target class, denoted by the set $D_{y^\prime}$:

$$\text{Imp}(\mathbf{x}^\prime) = \frac{1}{\vert D_{y^\prime}\vert} \sum_{\mathbf{x}_i \in D_{y^\prime}} \vert\vert \mathbf{x}^\prime - \mathbf{x}_i \vert\vert_2.$$

\textbf{IM1 Score} evaluates manifold alignment, let $AE_c(\cdot)$ denote the reconstruction by an autoencoder trained exclusively on instances of class $c$: 

$$\text{IM1}(\mathbf{x}^\prime, y_{orig}, y^\prime) = \frac{\vert\vert \mathbf{x}^\prime - AE_{y^\prime}(\mathbf{x}^\prime) \vert\vert_2^2}{\vert\vert \mathbf{x}^\prime - AE_{y_{orig}}(\mathbf{x}^\prime) \vert\vert_2^2 + \eta}.$$

\textbf{Robustness to Noisy Executions (NE)} measures the smoothness of the model's posterior predictive distribution around the counterfactual. We inject Gaussian noise $\delta$ into $\mathbf{x}^\prime$ and measure the predictive discrepancy:
$$\text{NE}(\mathbf{x}^\prime) = \mathbb{E}_{\delta} \left[ \Big\Vert \mathbb{E}_{\theta}[f_\theta(\mathbf{x}^\prime)] - \mathbb{E}_{\theta}[f_\theta(\mathbf{x}^\prime + \delta)] \Big\Vert_2^2 \right].$$

\textbf{Robustness to Input Changes ($R_{IC}$)} evaluates the stability of the counterfactual generator itself, $g_{\phi}(\cdot)$. We subject the original input $\mathbf{x}$ to an input perturbation $\delta$:

$$\text{R}_{IC}(\mathbf{x}, \sigma) = \frac{\vert\vert g_{\phi}(\mathbf{x} + \delta, y^\prime) - g_{\phi}(\mathbf{x}, y^\prime) \vert\vert_2^2}{\vert\vert g_{\phi}(\mathbf{x}, y^\prime) - \mathbf{x} \vert\vert_2^2 + \eta}.$$

\subsection{Implementation Details}
 In practice, we use automatic differentiation in PyTorch. Expectations over the Rashomon-restricted posterior are approximated using 50 Monte Carlo dropout forward passes per instance providing an empirical hypothesis-space approximation since MC Dropout is used defining a Rashomon set providing predictive multiplicity estimation \cite{DBLP:conf/iclr/HsuLH024}. 

Latent expectations are estimated using 5 samples from  $q_{\psi}(\mathbf{z} \mid \mathbf{x}, y^\prime)$. We approximate the Rashomon-restricted posterior using an empirical uniform distribution over retained MC-dropout predictors satisfying the validation loss constraint under a deterministic set of $\epsilon$, following recent results showing that dropout masks produce models that lie within the Rashomon set with high probability. \cite{DBLP:conf/iclr/HsuLH024}

\subsection{Image Dataset Results}
In Table \ref{tab:results_images} we present additional results on MNIST and CIFAR10 to show our method generalises to image data. 
\begin{table*}[t]
\centering
\caption{Amortized Counterfactual Evaluation on High-Dimensional Image Datasets (Mean $\pm$ std over 5 seeds). }
\label{tab:results_images}
\resizebox{\textwidth}{!}{
\begin{tabular}{l ccc ccccc c}
\toprule
\textbf{Method} & \textbf{Val} $\uparrow$ & \textbf{IM1} $\downarrow$ & \textbf{Imp} $\downarrow$ & \textbf{Div} $\uparrow$ & \textbf{CMV} $\uparrow$ & \textbf{NE} $\downarrow$ & \textbf{R$_{IC}$} $\downarrow$ & \textbf{RVR} $\uparrow$ & \textbf{Time (s)} $\downarrow$ \\
\midrule
\multicolumn{10}{c}{\textit{Dataset: CIFAR-10}} \\
\midrule
\multicolumn{10}{l}{\textbf{Environment: $\epsilon = 0.0$ (Strict Empirical Bound)}} \\
\midrule
Batten et al. & 0.940 $\pm$ 0.238 & 1.115 $\pm$ 0.189 & 93.594 $\pm$ 13.378 & 0.000 $\pm$ 0.000 & 0.068 $\pm$ 0.175 & 0.0097 $\pm$ 0.0089 & 0.4985 $\pm$ 0.1547 & 0.774 $\pm$ 0.418 & 0.0022 $\pm$ 0.0009 \\
QUCE          & 0.928 $\pm$ 0.259 & 1.082 $\pm$ 0.449 & 90.446 $\pm$ 12.771 & 0.000 $\pm$ 0.000 & 0.254 $\pm$ 0.307 & 0.0082 $\pm$ 0.0058 & 0.0481 $\pm$ 0.0391 & 0.622 $\pm$ 0.485 & 0.0114 $\pm$ 0.0059 \\
Schut et al.  & 0.944 $\pm$ 0.230 & 1.103 $\pm$ 0.157 & 94.061 $\pm$ 13.460 & 0.000 $\pm$ 0.000 & 0.116 $\pm$ 0.233 & 0.0126 $\pm$ 0.0131 & 0.4461 $\pm$ 0.4465 & 0.900 $\pm$ 0.300 & 1.8211 $\pm$ 0.5513 \\
\textbf{AVCG-B}        & \textbf{1.000} $\pm$ \textbf{0.000} & \textbf{1.036} $\pm$ \textbf{0.166} & 67.878 $\pm$ 5.515 & 6.033 $\pm$ 1.056 & \textbf{0.774} $\pm$ \textbf{0.305} & \textbf{0.0000} $\pm$ \textbf{0.0000} & \textbf{0.0103} $\pm$ \textbf{0.0092} & \textbf{1.000} $\pm$ \textbf{0.000} & \textbf{0.0013} $\pm$ \textbf{0.0001} \\
\textbf{AVCG-R ($\epsilon=0.0$)} & 0.998 $\pm$ 0.045 & 1.085 $\pm$ 0.252 & \textbf{67.673} $\pm$ \textbf{5.397} & \textbf{6.485} $\pm$ \textbf{2.069} & 0.324 $\pm$ 0.380 & 0.0130 $\pm$ 0.0157 & 0.0129 $\pm$ 0.0165 & \textbf{1.000} $\pm$ \textbf{0.000} & 0.0014 $\pm$ 0.0001 \\
\midrule
\multicolumn{10}{l}{\textbf{Environment: $\epsilon = 0.8$ (Relaxed Empirical Bound)}} \\
\midrule
Batten et al. & 0.926 $\pm$ 0.262 & 1.115 $\pm$ 0.189 & 93.582 $\pm$ 13.467 & 0.000 $\pm$ 0.000 & 0.068 $\pm$ 0.175 & 0.0103 $\pm$ 0.0098 & 0.5035 $\pm$ 0.1615 & 0.743 $\pm$ 0.227 & 0.0022 $\pm$ 0.0009 \\
QUCE          & 0.940 $\pm$ 0.238 & 1.082 $\pm$ 0.449 & 90.492 $\pm$ 12.757 & 0.000 $\pm$ 0.000 & 0.254 $\pm$ 0.307 & 0.0085 $\pm$ 0.0056 & 0.0460 $\pm$ 0.0408 & 0.631 $\pm$ 0.186 & 0.0114 $\pm$ 0.0059 \\
Schut et al.  & 0.952 $\pm$ 0.214 & 1.103 $\pm$ 0.157 & 94.145 $\pm$ 13.347 & 0.000 $\pm$ 0.000 & 0.116 $\pm$ 0.233 & 0.0122 $\pm$ 0.0140 & 0.4385 $\pm$ 0.4415 & 0.880 $\pm$ 0.204 & 1.8211 $\pm$ 0.5513 \\
\textbf{AVCG-B}        & \textbf{1.000} $\pm$ \textbf{0.000} & \textbf{1.036} $\pm$ \textbf{0.166} & 67.854 $\pm$ 5.622 & 6.033 $\pm$ 1.056 & \textbf{0.774} $\pm$ \textbf{0.305} & \textbf{0.0000} $\pm$ \textbf{0.0000} & \textbf{0.0105} $\pm$ \textbf{0.0105} & \textbf{1.000} $\pm$ \textbf{0.000} & \textbf{0.0013} $\pm$ \textbf{0.0001} \\
\textbf{AVCG-R ($\epsilon=0.8$)} & \textbf{1.000} $\pm$ \textbf{0.000} & 1.050 $\pm$ 0.175 & \textbf{67.602} $\pm$ \textbf{5.572} & \textbf{6.422 $\pm$ 1.194} & 0.513 $\pm$ 0.389 & 0.0018 $\pm$ 0.0047 & 0.0115 $\pm$ 0.0106 & \textbf{1.000} $\pm$ \textbf{0.000} & 0.0014 $\pm$ 0.0001 \\
\midrule
\multicolumn{10}{c}{\textit{Dataset: MNIST}} \\
\midrule
\multicolumn{10}{l}{\textbf{Environment: $\epsilon = 0.0$ (Strict Empirical Bound)}} \\
\midrule
Batten et al. & 0.834 $\pm$ 0.372 & 1.410 $\pm$ 0.408 & 34.425 $\pm$ 2.324 & 0.000 $\pm$ 0.000 & 0.023 $\pm$ 0.102 & 0.0159 $\pm$ 0.0145 & 0.6965 $\pm$ 0.1516 & 0.670 $\pm$ 0.470 & 0.0074 $\pm$ 0.0021 \\
QUCE          & 0.890 $\pm$ 0.313 & 1.392 $\pm$ 0.622 & 30.619 $\pm$ 2.075 & 0.000 $\pm$ 0.000 & 0.326 $\pm$ 0.343 & 0.0131 $\pm$ 0.0117 & 0.0372 $\pm$ 0.0405 & 0.620 $\pm$ 0.485 & 0.0310 $\pm$ 0.0148 \\
Schut et al.  & 0.982 $\pm$ 0.133 & 1.337 $\pm$ 0.419 & 34.737 $\pm$ 2.480 & 0.000 $\pm$ 0.000 & 0.252 $\pm$ 0.298 & 0.0178 $\pm$ 0.0213 & 0.5169 $\pm$ 0.4543 & 0.928 $\pm$ 0.259 & 1.3342 $\pm$ 0.4253 \\
\textbf{AVCG-B}        & \textbf{1.000} $\pm$ \textbf{0.000} & \textbf{1.218} $\pm$ \textbf{0.577} & \textbf{26.871} $\pm$ \textbf{1.557} & \textbf{2.489} $\pm$ \textbf{0.241} & \textbf{0.891} $\pm$ \textbf{0.245} & \textbf{0.0000} $\pm$ \textbf{0.0000} & 0.0138 $\pm$ 0.0053 & \textbf{1.000} $\pm$ \textbf{0.000} & \textbf{0.0013} $\pm$ \textbf{0.0001} \\
\textbf{AVCG-R ($\epsilon=0.0$)} & \textbf{1.000} $\pm$ \textbf{0.000} & 1.367 $\pm$ 0.778 & 26.975 $\pm$ 1.577 & 2.442 $\pm$ 0.247 & 0.482 $\pm$ 0.375 & 0.0031 $\pm$ 0.0045 & \textbf{0.0138} $\pm$ \textbf{0.0059} & \textbf{1.000} $\pm$ \textbf{0.000} & 0.0014 $\pm$ 0.0001 \\
\midrule
\multicolumn{10}{l}{\textbf{Environment: $\epsilon = 0.8$ (Relaxed Empirical Bound)}} \\
\midrule
Batten et al. & 0.800 $\pm$ 0.400 & 1.410 $\pm$ 0.408 & 34.430 $\pm$ 2.309 & 0.000 $\pm$ 0.000 & 0.023 $\pm$ 0.102 & 0.0157 $\pm$ 0.0131 & 0.7023 $\pm$ 0.1521 & 0.588 $\pm$ 0.198 & 0.0074 $\pm$ 0.0021 \\
QUCE          & 0.882 $\pm$ 0.323 & 1.392 $\pm$ 0.622 & 30.602 $\pm$ 2.115 & 0.000 $\pm$ 0.000 & 0.326 $\pm$ 0.343 & 0.0120 $\pm$ 0.0105 & 0.0377 $\pm$ 0.0570 & 0.605 $\pm$ 0.207 & 0.0310 $\pm$ 0.0148 \\
Schut et al.  & 0.984 $\pm$ 0.126 & 1.337 $\pm$ 0.419 & 34.734 $\pm$ 2.477 & 0.000 $\pm$ 0.000 & 0.252 $\pm$ 0.298 & 0.0173 $\pm$ 0.0173 & 0.4975 $\pm$ 0.4247 & 0.909 $\pm$ 0.133 & 1.3342 $\pm$ 0.4253 \\
\textbf{AVCG-B}        & \textbf{1.000} $\pm$ \textbf{0.000} & \textbf{1.218} $\pm$ \textbf{0.577} & \textbf{26.870} $\pm$ \textbf{1.501} & \textbf{2.489} $\pm$ \textbf{0.241} & \textbf{0.891} $\pm$ \textbf{0.245} & \textbf{0.0000} $\pm$ \textbf{0.0000} & \textbf{0.0138} $\pm$ \textbf{0.0055} & \textbf{1.000} $\pm$ \textbf{0.000} & \textbf{0.0013} $\pm$ \textbf{0.0001} \\
\textbf{AVCG-R ($\epsilon=0.8$)} & \textbf{1.000} $\pm$ \textbf{0.000} & 1.243 $\pm$ 0.631 & 26.944 $\pm$ 1.596 & 2.369 $\pm$ 0.262 & 0.823 $\pm$ 0.273 & \textbf{0.0000} $\pm$ \textbf{0.0000} & 0.0124 $\pm$ 0.0053 & \textbf{1.000} $\pm$ \textbf{0.000} & 0.0014 $\pm$ 0.0001 \\
\bottomrule
\end{tabular}
}
\end{table*}

\section{Additional $\epsilon$-Sweep Results}
In Figures~\ref{fig:epsilon_sweep1}--\ref{fig:epsilon_sweep2} we provide additional $\epsilon$-sweeps across all Tabular datasets presented in the main paper.
\begin{figure*}[h]
    \centering
    \includegraphics[width=\textwidth, height=0.55\textheight, keepaspectratio]{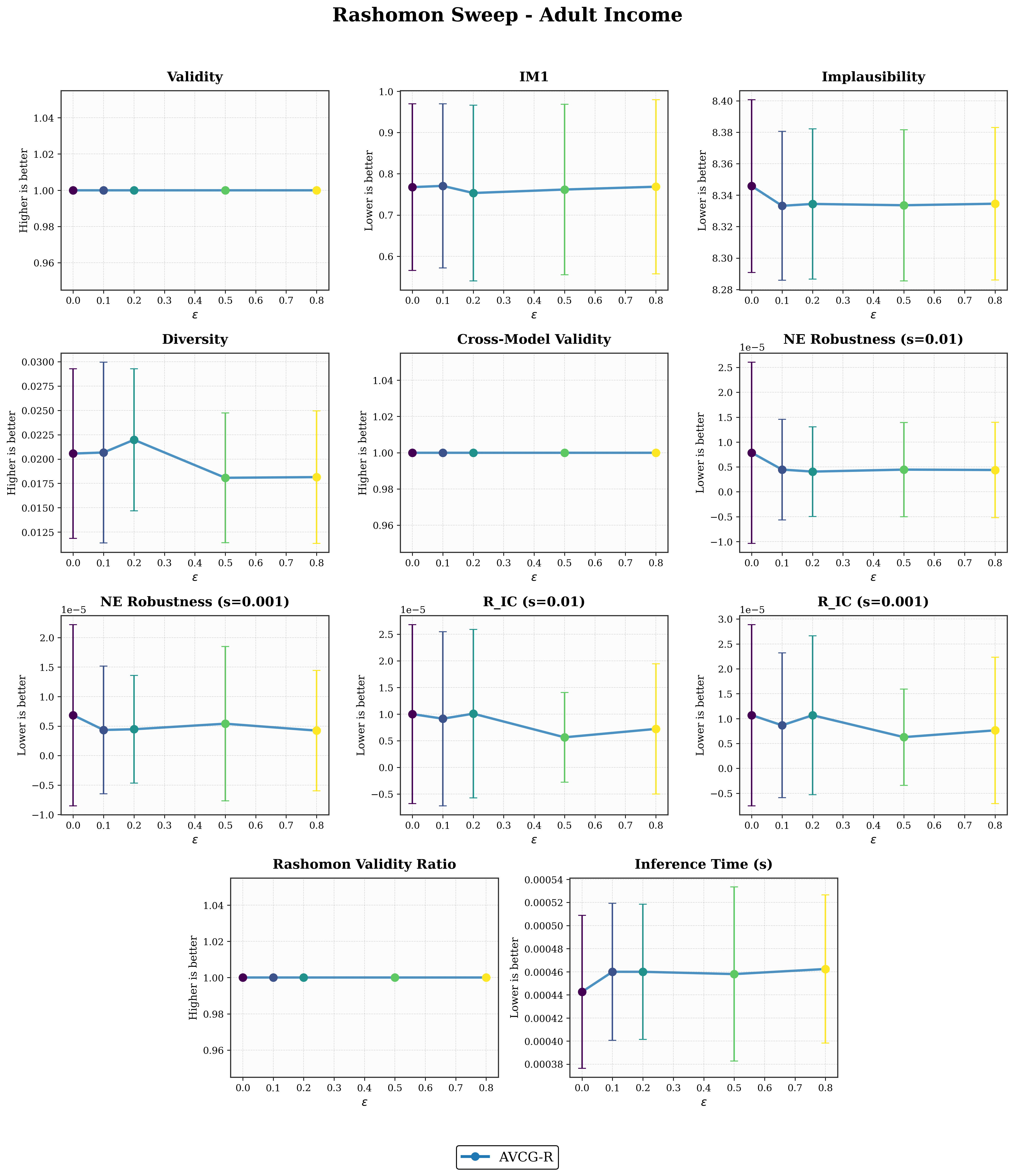}
    \caption{$\epsilon$-sweep on the adult income dataset for AVCG-R. Here we observe the impact of the $\epsilon$ hyperparameter for the Rashomon set.}
    \label{fig:epsilon_sweep1}
\end{figure*}
\begin{figure*}[h]
    \centering
    \includegraphics[width=\textwidth, height=0.55\textheight, keepaspectratio]{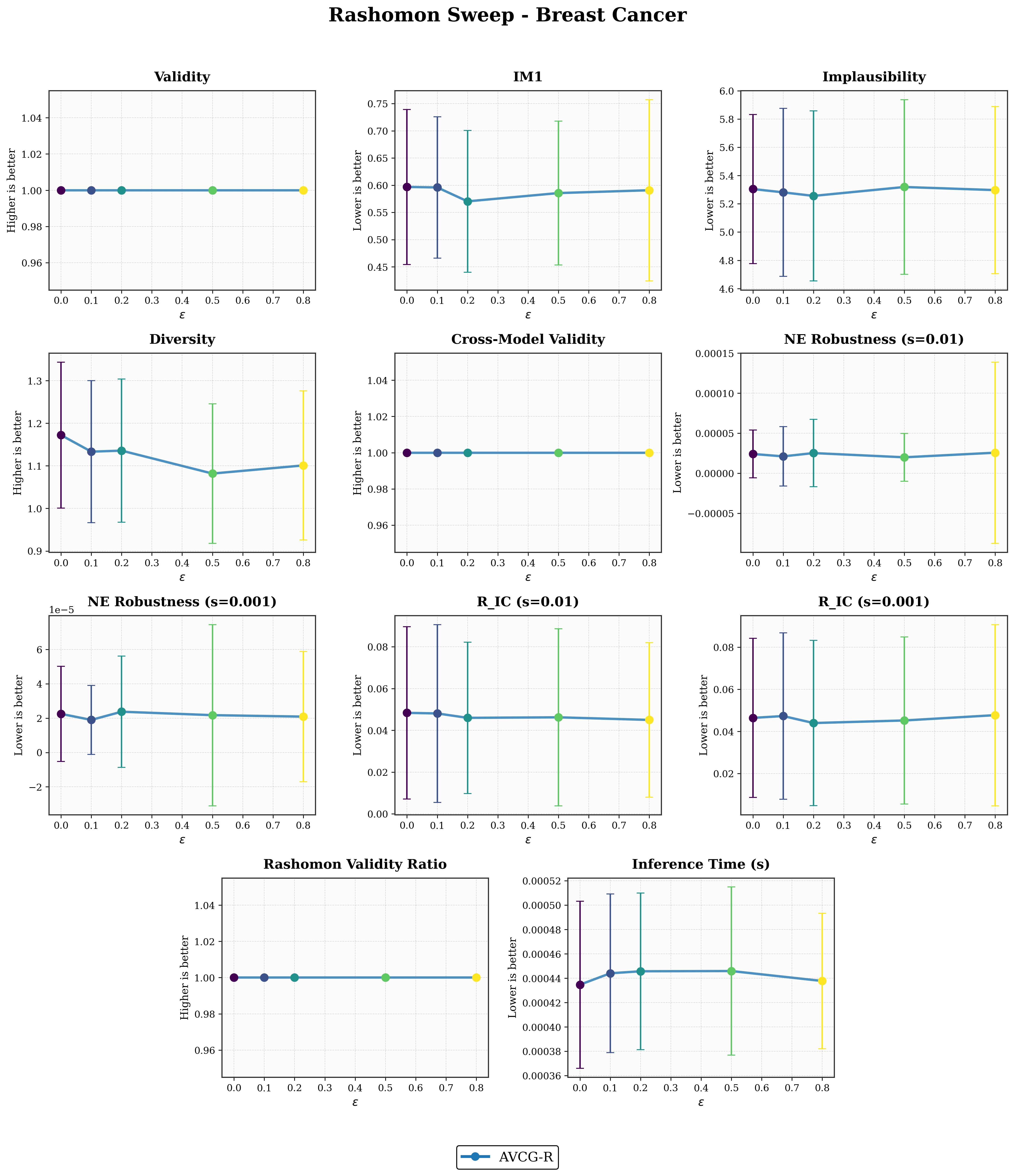}
    \caption{$\epsilon$-sweep on the breast cancer dataset for AVCG-R. Here we observe the impact of the $\epsilon$ hyperparameter for the Rashomon set.}
    \label{fig:epsilon_sweep}
\end{figure*}

\begin{figure*}[h]
    \centering
    \includegraphics[width=\textwidth, height=0.55\textheight, keepaspectratio]{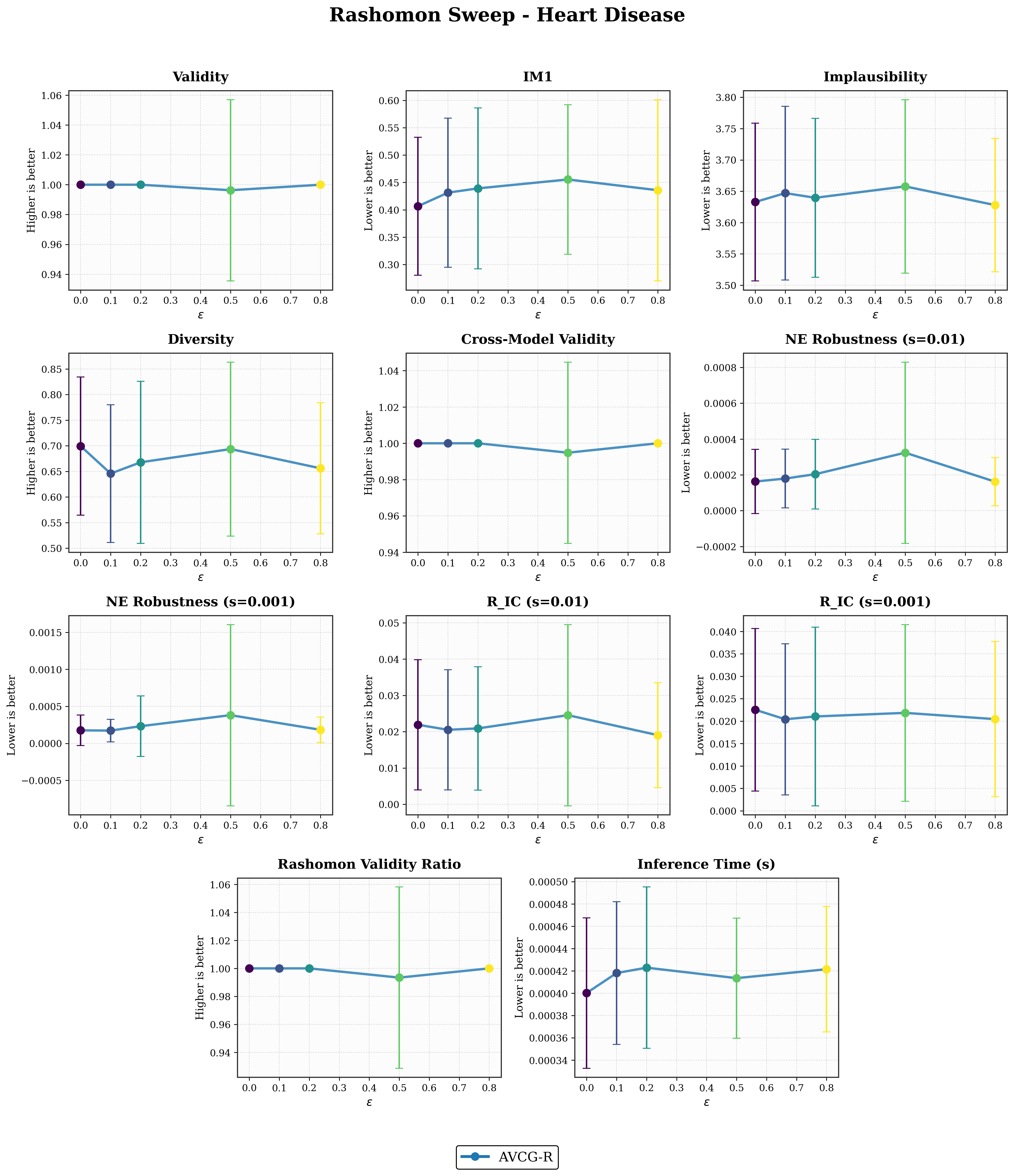}
    \caption{$\epsilon$-sweep on the heart disease dataset for AVCG-R. Here we observe the impact of the $\epsilon$ hyperparameter for the Rashomon set.}
    \label{fig:epsilon_sweep2}
\end{figure*}

\section{Ablations}
In Figure~\ref{fig:ablation_1} and Figure~\ref{fig:ablation_2} we provide ablation studies on each term across each metric.
\begin{figure*}
    \centering
    \includegraphics[width=0.5\linewidth]{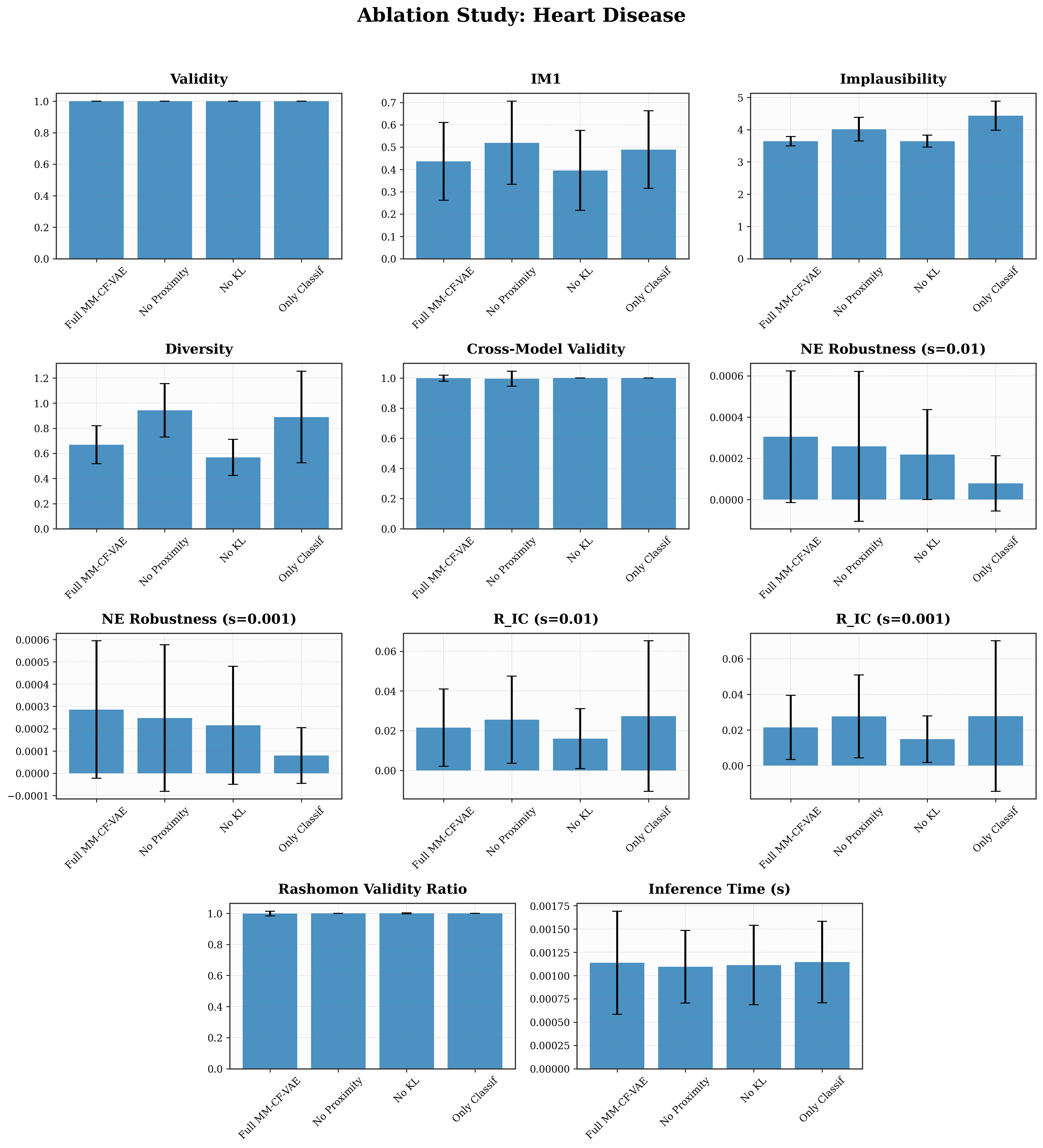}
    \caption{Ablation study on the heart disease dataset across key metrics.}
    \label{fig:ablation_1}
\end{figure*}

\begin{figure*}
    \centering
    \includegraphics[width=0.5\linewidth]{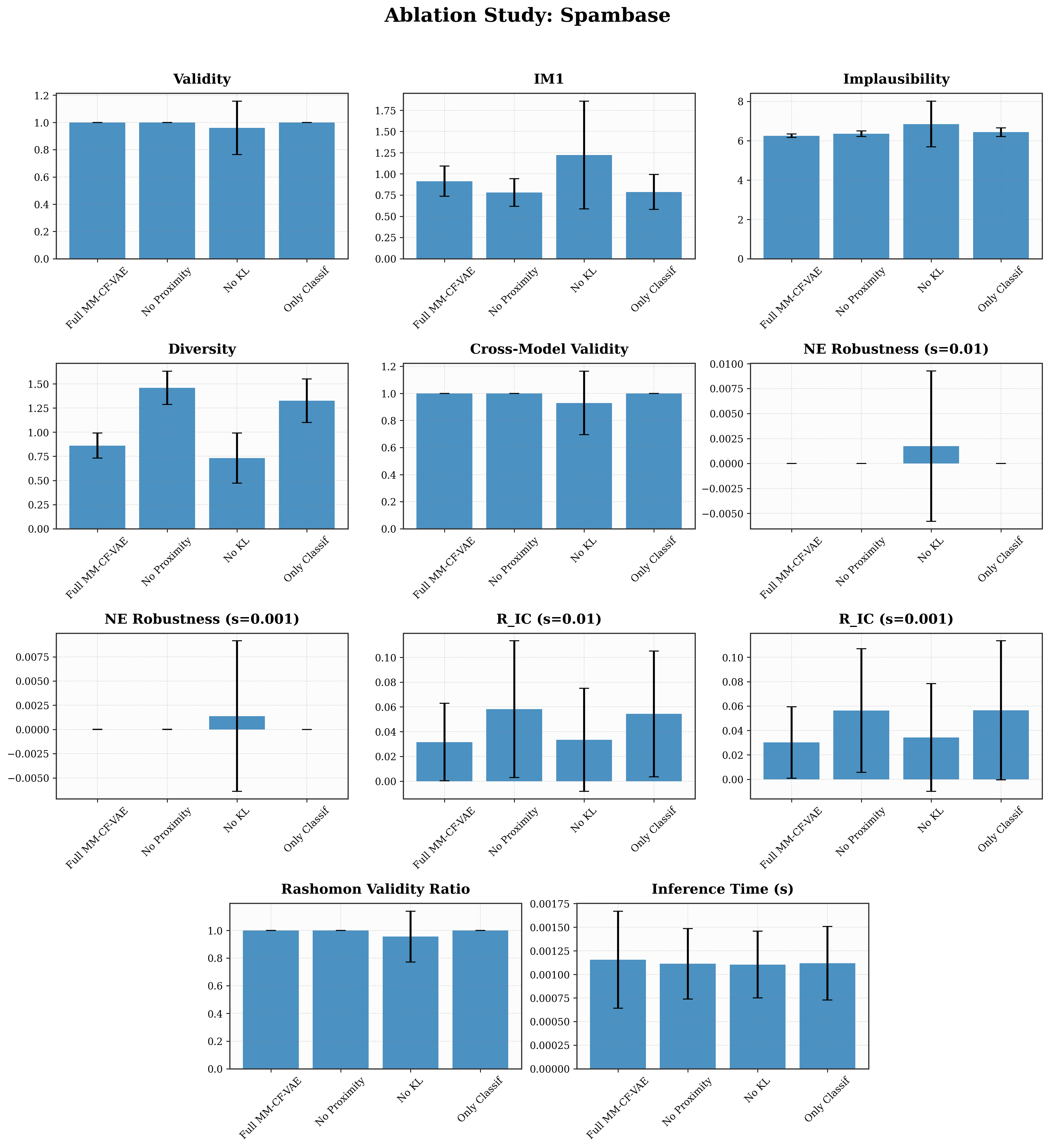}
    \caption{Ablation study on the spambase dataset across key metrics.}
    \label{fig:ablation_2}
\end{figure*}

\section{Additional Methods}
In this section we provide a brief comparison of the standard Conditional VAE (CVAE) inferencing $p(\mathbf{x} \mid y^\prime, \mathbf{z})$, where $p(\mathbf{x} \mid y, \mathbf{z})$ is the learned standard CVAE. We also include Wachter et al. \cite{wachter}. 

\begin{table*}[t]
\centering
\caption{Amortized Counterfactual Evaluation on Adult Income and Breast Cancer (Mean $\pm$ std over 5 seeds).}
\label{tab:results_adult_breast}
\resizebox{\textwidth}{!}{
\begin{tabular}{l ccc cccccc c}
\toprule
\textbf{Method} & \textbf{Val} $\uparrow$ & \textbf{IM1} $\downarrow$ & \textbf{Imp} $\downarrow$ & \textbf{Div} $\uparrow$ & \textbf{CMV} $\uparrow$ & \textbf{NE} $\downarrow$ & \textbf{R$_{IC}$} $\downarrow$ & \textbf{RVR} $\uparrow$ & \textbf{Time (s)} $\downarrow$ \\
\midrule
\multicolumn{10}{c}{\textit{Dataset: Adult Income}} \\
\midrule
\multicolumn{10}{l}{\textbf{Environment: $\epsilon = 0.0$ (Strict Empirical Bound)}} \\
\midrule
CF-VAE        & 0.226 $\pm$ 0.418 & 1.404 $\pm$ 0.966 & 10.246 $\pm$ 2.855 & \textbf{2.332} $\pm$ \textbf{0.568} & 0.224 $\pm$ 0.378 & 0.0009 $\pm$ 0.0029 & 0.1553 $\pm$ 0.1150 & 0.206 $\pm$ 0.393 & 0.0004 $\pm$ 0.0001 \\
Wachter et al.& 0.762 $\pm$ 0.426 & 2.531 $\pm$ 3.570 & 11.712 $\pm$ 3.317 & 0.000 $\pm$ 0.000 & 0.550 $\pm$ 0.319 & 0.0074 $\pm$ 0.0106 & 0.0087 $\pm$ 0.0271 & 0.396 $\pm$ 0.368 & 0.0603 $\pm$ 0.1166 \\
\textbf{AVCG-B}        & \textbf{1.000} $\pm$ \textbf{0.000} & \textbf{0.756} $\pm$ \textbf{0.202} & \textbf{8.328} $\pm$ \textbf{0.047} & 0.027 $\pm$ 0.009 & \textbf{1.000} $\pm$ \textbf{0.000} & \textbf{0.0000} $\pm$ \textbf{0.0000} & \textbf{0.0000} $\pm$ \textbf{0.0000} & \textbf{1.000} $\pm$ \textbf{0.000} & \textbf{0.0003} $\pm$ \textbf{0.0000} \\
\textbf{AVCG-R ($\epsilon=0.0$)} & \textbf{1.000} $\pm$ \textbf{0.000} & 0.768 $\pm$ 0.202 & 8.346 $\pm$ 0.055 & 0.021 $\pm$ 0.009 & \textbf{1.000} $\pm$ \textbf{0.000} & \textbf{0.0000} $\pm$ \textbf{0.0000} & \textbf{0.0000} $\pm$ \textbf{0.0000} & \textbf{1.000} $\pm$ \textbf{0.000} & 0.0004 $\pm$ 0.0001 \\
\midrule
\multicolumn{10}{l}{\textbf{Environment: $\epsilon = 0.8$ (Relaxed Empirical Bound)}} \\
\midrule
CF-VAE        & 0.232 $\pm$ 0.422 & 1.404 $\pm$ 0.966 & 10.246 $\pm$ 2.855 & \textbf{2.332} $\pm$ \textbf{0.568} & 0.224 $\pm$ 0.378 & 0.0009 $\pm$ 0.0030 & 0.1570 $\pm$ 0.1215 & 0.222 $\pm$ 0.390 & 0.0004 $\pm$ 0.0001 \\
Wachter et al.& 0.792 $\pm$ 0.406 & 2.531 $\pm$ 3.570 & 11.712 $\pm$ 3.317 & 0.000 $\pm$ 0.000 & 0.550 $\pm$ 0.319 & 0.0065 $\pm$ 0.0074 & 0.0096 $\pm$ 0.0312 & 0.549 $\pm$ 0.122 & 0.0603 $\pm$ 0.1166 \\
\textbf{AVCG-B}        & \textbf{1.000} $\pm$ \textbf{0.000} & \textbf{0.756} $\pm$ \textbf{0.202} & \textbf{8.328} $\pm$ \textbf{0.047} & 0.027 $\pm$ 0.009 & \textbf{1.000} $\pm$ \textbf{0.000} & \textbf{0.0000} $\pm$ \textbf{0.0000} & \textbf{0.0000} $\pm$ \textbf{0.0000} & \textbf{1.000} $\pm$ \textbf{0.000} & \textbf{0.0003} $\pm$ \textbf{0.0000} \\
\textbf{AVCG-R ($\epsilon=0.8$)} & \textbf{1.000} $\pm$ \textbf{0.000} & 0.769 $\pm$ 0.211 & \textbf{8.335} $\pm$ \textbf{0.049} & 0.018 $\pm$ 0.007 & \textbf{1.000} $\pm$ \textbf{0.000} & \textbf{0.0000} $\pm$ \textbf{0.0000} & \textbf{0.0000} $\pm$ \textbf{0.0000} & \textbf{1.000} $\pm$ \textbf{0.000} & 0.0005 $\pm$ 0.0001 \\
\midrule
\multicolumn{10}{c}{\textit{Dataset: Breast Cancer}} \\
\midrule
\multicolumn{10}{l}{\textbf{Environment: $\epsilon = 0.0$ (Strict Empirical Bound)}} \\
\midrule
CF-VAE        & 0.414 $\pm$ 0.493 & 2.909 $\pm$ 2.912 & 6.704 $\pm$ 1.094 & \textbf{2.031} $\pm$ \textbf{0.471} & 0.380 $\pm$ 0.460 & 0.0016 $\pm$ 0.0036 & 0.2866 $\pm$ 0.2297 & 0.411 $\pm$ 0.471 & 0.0011 $\pm$ 0.0003 \\
Wachter et al.& 0.748 $\pm$ 0.434 & 1.609 $\pm$ 0.827 & 7.260 $\pm$ 1.236 & 0.000 $\pm$ 0.000 & 0.335 $\pm$ 0.319 & 0.0072 $\pm$ 0.0076 & 0.0008 $\pm$ 0.0024 & 0.583 $\pm$ 0.184 & 0.0640 $\pm$ 0.0565 \\
\textbf{AVCG-B}        & \textbf{1.000} $\pm$ \textbf{0.000} & \textbf{0.573} $\pm$ \textbf{0.136} & 5.341 $\pm$ 0.543 & 1.132 $\pm$ 0.162 & \textbf{1.000} $\pm$ \textbf{0.000} & \textbf{0.0000} $\pm$ \textbf{0.0003} & 0.0473 $\pm$ 0.0361 & \textbf{1.000} $\pm$ \textbf{0.000} & \textbf{0.0003} $\pm$ \textbf{0.0000} \\
\textbf{AVCG-R ($\epsilon=0.0$)} & \textbf{1.000} $\pm$ \textbf{0.000} & 0.597 $\pm$ 0.143 & \textbf{5.305} $\pm$ \textbf{0.527} & 1.172 $\pm$ 0.171 & \textbf{1.000} $\pm$ \textbf{0.000} & \textbf{0.0000} $\pm$ \textbf{0.0000} & 0.0484 $\pm$ 0.0413 & \textbf{1.000} $\pm$ \textbf{0.000} & 0.0004 $\pm$ 0.0001 \\
\midrule
\multicolumn{10}{l}{\textbf{Environment: $\epsilon = 0.8$ (Relaxed Empirical Bound)}} \\
\midrule
CF-VAE        & 0.414 $\pm$ 0.493 & 2.908 $\pm$ 2.912 & 6.704 $\pm$ 1.094 & \textbf{2.031} $\pm$ \textbf{0.471} & 0.380 $\pm$ 0.460 & 0.0016 $\pm$ 0.0036 & 0.3048 $\pm$ 0.2510 & 0.417 $\pm$ 0.460 & 0.0011 $\pm$ 0.0003 \\
Wachter et al.& 0.768 $\pm$ 0.422 & 1.609 $\pm$ 0.827 & 7.260 $\pm$ 1.236 & 0.000 $\pm$ 0.000 & 0.335 $\pm$ 0.319 & 0.0075 $\pm$ 0.0079 & \textbf{0.0008} $\pm$ \textbf{0.0020} & 0.576 $\pm$ 0.067 & 0.0640 $\pm$ 0.0565 \\
\textbf{AVCG-B}        & \textbf{1.000} $\pm$ \textbf{0.000} & \textbf{0.573} $\pm$ \textbf{0.136} & 5.341 $\pm$ 0.543 & \textbf{1.132} $\pm$ \textbf{0.162} & \textbf{1.000} $\pm$ \textbf{0.000} & \textbf{0.0000} $\pm$ \textbf{0.0000} & 0.0466 $\pm$ 0.0397 & \textbf{1.000} $\pm$ \textbf{0.000} & \textbf{0.0003} $\pm$ \textbf{0.0000} \\
\textbf{AVCG-R ($\epsilon=0.8$)} & \textbf{1.000} $\pm$ \textbf{0.000} & 0.591 $\pm$ 0.167 & \textbf{5.297} $\pm$ \textbf{0.591} & 1.101 $\pm$ 0.175 & \textbf{1.000} $\pm$ \textbf{0.000} & \textbf{0.0000} $\pm$ \textbf{0.0000} & 0.0450 $\pm$ 0.0370 & \textbf{1.000} $\pm$ \textbf{0.000} & 0.0004 $\pm$ 0.0001 \\
\bottomrule
\end{tabular}
}
\end{table*}

\begin{table*}[t]
\centering
\caption{Amortized Counterfactual Evaluation on Heart Disease and Spambase (Mean $\pm$ std over 5 seeds).}
\label{tab:results_heart_spam}
\resizebox{\textwidth}{!}{
\begin{tabular}{l ccc cccccc c}
\toprule
\textbf{Method} & \textbf{Val} $\uparrow$ & \textbf{IM1} $\downarrow$ & \textbf{Imp} $\downarrow$ & \textbf{Div} $\uparrow$ & \textbf{CMV} $\uparrow$ & \textbf{NE} $\downarrow$ & \textbf{R$_{IC}$} $\downarrow$ & \textbf{RVR} $\uparrow$ & \textbf{Time (s)} $\downarrow$ \\
\midrule
\multicolumn{10}{c}{\textit{Dataset: Heart Disease}} \\
\midrule
\multicolumn{10}{l}{\textbf{Environment: $\epsilon = 0.0$ (Strict Empirical Bound)}} \\
\midrule
CF-VAE        & 0.218 $\pm$ 0.412 & 3.682 $\pm$ 5.346 & 4.160 $\pm$ 0.462 & \textbf{1.282} $\pm$ \textbf{0.281} & 0.212 $\pm$ 0.390 & 0.0006 $\pm$ 0.0005 & 0.1212 $\pm$ 0.1098 & 0.223 $\pm$ 0.387 & 0.0011 $\pm$ 0.0001 \\
Wachter et al.& 0.581 $\pm$ 0.493 & 1.141 $\pm$ 0.358 & 4.808 $\pm$ 0.456 & 0.000 $\pm$ 0.000 & 0.408 $\pm$ 0.340 & 0.0034 $\pm$ 0.0033 & \textbf{0.0040} $\pm$ \textbf{0.0240} & 0.500 $\pm$ 0.221 & 0.0408 $\pm$ 0.0223 \\
\textbf{AVCG-B}        & \textbf{1.000} $\pm$ \textbf{0.000} & 0.407 $\pm$ \textbf{0.163} & \textbf{3.616} $\pm$ \textbf{0.123} & 0.670 $\pm$ 0.127 & \textbf{1.000} $\pm$ \textbf{0.000} & \textbf{0.0002} $\pm$ \textbf{0.0002} & 0.0245 $\pm$ 0.0201 & \textbf{1.000} $\pm$ \textbf{0.000} & \textbf{0.0003} $\pm$ \textbf{0.0001} \\
\textbf{AVCG-R ($\epsilon=0.0$)} & \textbf{1.000} $\pm$ \textbf{0.000} & \textbf{0.407} $\pm$ 0.126 & 3.633 $\pm$ 0.126 & 0.699 $\pm$ 0.135 & \textbf{1.000} $\pm$ \textbf{0.000} & \textbf{0.0002} $\pm$ \textbf{0.0002} & 0.0219 $\pm$ 0.0180 & \textbf{1.000} $\pm$ \textbf{0.000} & 0.0004 $\pm$ 0.0001 \\
\midrule
\multicolumn{10}{l}{\textbf{Environment: $\epsilon = 0.8$ (Relaxed Empirical Bound)}} \\
\midrule
CF-VAE        & 0.226 $\pm$ 0.418 & 3.682 $\pm$ 5.346 & 4.160 $\pm$ 0.461 & \textbf{1.282} $\pm$ \textbf{0.281} & 0.212 $\pm$ 0.390 & 0.0006 $\pm$ 0.0005 & 0.1255 $\pm$ 0.1109 & 0.223 $\pm$ 0.376 & 0.0011 $\pm$ 0.0001 \\
Wachter et al.& 0.581 $\pm$ 0.493 & 1.141 $\pm$ 0.358 & 4.808 $\pm$ 0.456 & 0.000 $\pm$ 0.000 & 0.408 $\pm$ 0.340 & 0.0033 $\pm$ 0.0032 & \textbf{0.0037} $\pm$ \textbf{0.0190} & 0.517 $\pm$ 0.067 & 0.0408 $\pm$ 0.0223 \\
\textbf{AVCG-B}        & \textbf{1.000} $\pm$ \textbf{0.000} & \textbf{0.407} $\pm$ 0.163 & \textbf{3.616} $\pm$ \textbf{0.123} & 0.670 $\pm$ 0.127 & \textbf{1.000} $\pm$ \textbf{0.000} & \textbf{0.0002} $\pm$ \textbf{0.0002} & 0.0261 $\pm$ 0.0341 & \textbf{1.000} $\pm$ \textbf{0.004} & \textbf{0.0003} $\pm$ \textbf{0.0001} \\
\textbf{AVCG-R ($\epsilon=0.8$)} & \textbf{1.000} $\pm$ \textbf{0.000} & 0.436 $\pm$ 0.166 & 3.628 $\pm$ 0.107 & 0.656 $\pm$ 0.128 & \textbf{1.000} $\pm$ \textbf{0.000} & \textbf{0.0002} $\pm$ \textbf{0.0001} & 0.0190 $\pm$ 0.0144 & \textbf{1.000} $\pm$ \textbf{0.000} & 0.0004 $\pm$ 0.0001 \\
\midrule
\multicolumn{10}{c}{\textit{Dataset: Spambase}} \\
\midrule
\multicolumn{10}{l}{\textbf{Environment: $\epsilon = 0.0$ (Strict Empirical Bound)}} \\
\midrule
CF-VAE        & 0.334 $\pm$ 0.472 & 3.159 $\pm$ 3.534 & 8.248 $\pm$ 2.909 & \textbf{2.520} $\pm$ \textbf{0.499} & 0.346 $\pm$ 0.437 & 0.0060 $\pm$ 0.0205 & 0.3544 $\pm$ 0.2724 & 0.334 $\pm$ 0.446 & 0.0011 $\pm$ 0.0001 \\
Wachter et al.& 0.732 $\pm$ 0.443 & 2.750 $\pm$ 2.440 & 9.535 $\pm$ 3.474 & 0.000 $\pm$ 0.000 & 0.495 $\pm$ 0.331 & 0.0227 $\pm$ 0.0188 & \textbf{0.0054} $\pm$ \textbf{0.0178} & 0.566 $\pm$ 0.252 & 0.0301 $\pm$ 0.0448 \\
\textbf{AVCG-B}        & \textbf{1.000} $\pm$ \textbf{0.000} & 0.983 $\pm$ 0.244 & \textbf{6.276} $\pm$ \textbf{0.091} & 0.744 $\pm$ 0.107 & \textbf{1.000} $\pm$ \textbf{0.000} & \textbf{0.0000} $\pm$ \textbf{0.0000} & 0.0231 $\pm$ \textbf{0.0231} & \textbf{1.000} $\pm$ \textbf{0.000} & \textbf{0.0003} $\pm$ \textbf{0.0000} \\
\textbf{AVCG-R ($\epsilon=0.0$)} & \textbf{1.000} $\pm$ \textbf{0.000} & \textbf{0.925} $\pm$ \textbf{0.207} & 6.278 $\pm$ 0.084 & 0.832 $\pm$ 0.117 & \textbf{1.000} $\pm$ \textbf{0.000} & \textbf{0.0000} $\pm$ \textbf{0.0000} & 0.0295 $\pm$ 0.0293 & \textbf{1.000} $\pm$ \textbf{0.000} & 0.0004 $\pm$ 0.0001 \\
\midrule
\multicolumn{10}{l}{\textbf{Environment: $\epsilon = 0.8$ (Relaxed Empirical Bound)}} \\
\midrule
CF-VAE        & 0.334 $\pm$ 0.472 & 3.216 $\pm$ 3.381 & 8.242 $\pm$ 2.946 & \textbf{2.454} $\pm$ \textbf{0.484} & 0.348 $\pm$ 0.432 & 0.0044 $\pm$ 0.0103 & 0.3346 $\pm$ 0.2369 & 0.333 $\pm$ 0.435 & 0.0011 $\pm$ 0.0001 \\
Wachter et al.& 0.716 $\pm$ 0.451 & 2.750 $\pm$ 2.440 & 9.535 $\pm$ 3.475 & 0.000 $\pm$ 0.000 & 0.495 $\pm$ 0.331 & 0.0221 $\pm$ 0.0179 & \textbf{0.0051} $\pm$ \textbf{0.0168} & 0.569 $\pm$ 0.100 & 0.0301 $\pm$ 0.0448 \\
\textbf{AVCG-B}        & \textbf{1.000} $\pm$ \textbf{0.000} & \textbf{0.983} $\pm$ \textbf{0.244} & \textbf{6.276} $\pm$ \textbf{0.091} & \textbf{0.776} $\pm$ \textbf{0.128} & \textbf{1.000} $\pm$ \textbf{0.000} & \textbf{0.0000} $\pm$ \textbf{0.0000} & 0.0242 $\pm$ 0.0249 & \textbf{1.000} $\pm$ \textbf{0.001} & \textbf{0.0003} $\pm$ \textbf{0.0000} \\
\textbf{AVCG-R ($\epsilon=0.8$)} & \textbf{1.000} $\pm$ \textbf{0.000} & 1.037 $\pm$ 0.238 & 6.332 $\pm$ 0.087 & 0.690 $\pm$ 0.108 & \textbf{1.000} $\pm$ \textbf{0.000} & \textbf{0.0000} $\pm$ \textbf{0.0000} & 0.0208 $\pm$ 0.0239 & \textbf{1.000} $\pm$ \textbf{0.000} & 0.0004 $\pm$ 0.0001 \\
\bottomrule
\end{tabular}
}
\end{table*}


\end{document}